\documentclass[acmlarge]{acmart}
\makeatletter
\newcommand{\confshort}{\acmConference@shortname}
\newcommand{\conffull}{\acmConference@name}
\newcommand{\confdate}{\acmConference@date}
\newcommand{\confloc}{\acmConference@venue}
\AtBeginDocument{
  \fancypagestyle{firstpagestyle}{
    \fancyhead{}%
    \fancyfoot[C]{}%
  }
  \fancyhf{}
  \fancyhead[LO]{\@headfootfont\shorttitle}%
  \fancyhead[RE]{\@headfootfont\@shortauthors}%
  \fancyhead[LE]{\@headfootfont\footnotesize \confshort, \confdate, \confloc}%
  \fancyhead[RO]{\@headfootfont\footnotesize \confshort, \confdate, \confloc}%
  \fancyfoot[C]{}%
}
\makeatother
\AtBeginDocument{%
  }
\acmBooktitle{\conffull\@ (\confshort), \confdate, \confloc}

\copyrightyear{2026}
\acmYear{2026}
\setcopyright{cc}
\setcctype{by}
\acmConference[FAccT '26]{The 2026 ACM Conference on Fairness, Accountability, and Transparency}{June 25--28, 2026}{Montreal, QC, Canada}
\acmBooktitle{The 2026 ACM Conference on Fairness, Accountability, and Transparency (FAccT '26), June 25--28, 2026, Montreal, QC, Canada}
\acmDOI{10.1145/3805689.3812359}
\acmISBN{979-8-4007-2596-8/2026/06}
\usepackage[svgnames]{xcolor}
\usepackage[linesnumbered,ruled,vlined]{algorithm2e}
\usepackage{graphicx}
\usepackage{subcaption}

\newtheorem{theorem}{Theorem}[section]

\newtheorem*{remark}{Remark}

\def\ie{\emph{i.e., }}
\def\eg{\emph{e.g., }}
\DeclareMathOperator{\Dom}{Dom} %

\usepackage{tcolorbox}		%
\tcbuselibrary{breakable,skins}		%
\tcbset{examplestyle/.style={
		enhanced jigsaw,	%
		colback=blue!08,	%
		colframe=blue!08,	%
		arc=2mm,
		boxrule=1pt,		%
		left=1mm,
		right=1mm,
		left skip=0mm,  %
		right skip=0mm, %
		top=0mm,		%
		bottom=1mm,		%
		breakable,		%
		parbox = false,		%
		before={\par\pagebreak[0]\vspace{.5mm}\parindent=0pt},		
		after={\par\pagebreak[0]\vspace{0mm}\parindent=0pt},		%
		bottomrule = 0mm,
		boxsep = 0mm,					%
		topsep at break=0pt,			%
		bottomsep at break=0pt,			%
		pad at break=0mm,
		pad before break=0mm,		
		pad after break=1mm,		
		bottomrule at break=0mm,
		toprule at break=0mm,		
		}}
\tcolorboxenvironment{example}{examplestyle}
\usepackage{caption}
\usepackage{placeins}
\usepackage{enumitem}

\newcommand{\answerEqyy}[1]{{\color{black} {#1}}}
\newcommand{\answerAfBc}[1]{{\color{black} {#1}}}
\newcommand{\answerHgb}[1]{{\color{black} {#1}}}
\newcommand{\answer}[1]{{\color{black} {#1}}}

\begin{document}

\title{Data Bias Mitigation under Coverage Constraints \& \\The Price of Fairness}

\author{Bruno Scarone}
\orcid{0009-0001-0155-3892}
\affiliation{%
  \institution{
  Khoury College of Computer Sciences, 
  Northeastern University}
  \city{Boston}
  \state{MA}
  \country{USA}
}
\email{scarone.b@northeastern.edu}
\author{Alfredo Viola}
\orcid{0000-0002-9518-7554}
\affiliation{%
  \institution{
  Casa de Investigadores Científicos La Comarca}
  \city{La Floresta}
  \state{Canelones}
  \country{Uruguay}
}
\email{alfredo.viola@gmail.com}
\author{Renée J. Miller}
\orcid{0000-0002-1484-4787}
\affiliation{%
  \institution{
  Cheriton School of Computer Science,
  University of Waterloo}
  \city{Waterloo}
  \state{Ontario}
  \country{Canada}
}
\email{rjmiller@uwaterloo.ca}

\renewcommand{\shortauthors}{Scarone et al.}

\begin{abstract}
Machine learning models have been shown to exhibit discriminatory outcomes or degraded performance for individuals at the intersection of multiple sensitive attributes, such as race and gender. This stems in part from two interrelated challenges: the lack of principled measures for quantifying bias (potentially intersectional), and insufficient representation of intersectional subgroups in training data. %
We extend a recent bias mitigation framework to incorporate coverage constraints that enforce sufficient representation across groups, including intersectional subgroups. Since achieving exactly zero bias for all groups may not be data efficient (meaning it may require large amounts of data), our solution trades small approximation errors in bias for greater data efficiency while satisfying coverage constraints.
We also formulate bias mitigation as an integer linear program that optimizes over all mitigation strategies, and characterize the price of fairness, the minimum data modification cost, as a function of fairness tolerance. This is essential both for legal compliance, where regulations may mandate specific fairness thresholds, and for data governance, enabling practitioners to make informed trade-offs between bias reduction and data modification (particularly, data purchasing) costs. We evaluate our techniques on publicly available datasets, demonstrating that bias mitigation via our framework preserves predictive accuracy across multiple classifiers, and that coverage constraints, while motivated by statistical considerations, are essential for preserving downstream ML performance.
\end{abstract}

\begin{CCSXML}
<ccs2012>
   <concept>
       <concept_id>10003456.10003462</concept_id>
       <concept_desc>Social and professional topics~Computing / technology policy</concept_desc>
       <concept_significance>500</concept_significance>
       </concept>
   <concept>
       <concept_id>10002950.10003648</concept_id>
       <concept_desc>Mathematics of computing~Probability and statistics</concept_desc>
       <concept_significance>300</concept_significance>
       </concept>
   <concept>
       <concept_id>10003752.10003809.10003716</concept_id>
       <concept_desc>Theory of computation~Mathematical optimization</concept_desc>
       <concept_significance>500</concept_significance>
       </concept>
   <concept>
       <concept_id>10002951.10002952</concept_id>
       <concept_desc>Information systems~Data management systems</concept_desc>
       <concept_significance>300</concept_significance>
       </concept>
 </ccs2012>
\end{CCSXML}

\ccsdesc[500]{Social and professional topics~Computing / technology policy}
\ccsdesc[300]{Mathematics of computing~Probability and statistics}
\ccsdesc[500]{Theory of computation~Mathematical optimization}
\ccsdesc[300]{Information systems~Data management systems}

\keywords{Data Bias, Data Coverage, Bias Mitigation for Machine Learning, Computable Bias Metrics} %

\maketitle

\section{Introduction}\label{sec:intro}

Recently, there has been progress in proposing statistical measures for data bias, including intersectional bias, with the goal of providing practical measures that can be used operationally within machine learning (ML)~\cite{kanubala2025misalignment,scarone2025principled}. These measures assume a dataset with one or more sensitive attributes that define a set of sensitive groups along with a label (or class) attribute.  We build on this foundation to consider the issue of (sensitive) group coverage, a criterion  which is central to machine learning and one that is rarely considered in the statistical bias literature. We also consider the cost of bias mitigation, specifically the cost of acquiring new data.  Our work allows a data scientist to directly compare this cost with their fairness tolerance (that is, how much bias is acceptable in the data). 
In practical bias mitigation scenarios, ensuring that the resulting dataset meets certain minimum representation requirements is often as important as achieving fairness \cite{shahbazi2023representation,nargesian2021tailoring}. For instance, downstream statistical analyses or machine learning models trained on the mitigated data may require a minimum number of samples per group-label pair to produce reliable estimates. Similarly, regulatory or operational constraints may mandate that each demographic group maintains adequate representation after any data transformation. We refer to such requirements as \emph{coverage constraints}.
In the context of bias mitigation, these constraints specify, for each group-label combination in the dataset (\eg hired candidates from each demographic group), the minimum \answerAfBc{absolute count} of tuples the mitigated dataset must \answerAfBc{retain}. \answerAfBc{As we show in Section~\ref{sec:ml_evaluation}, without explicit coverage constraints, optimization-based bias mitigation can reduce datasets to the point of destroying predictive performance of ML models.} The challenge lies in finding solutions that %
achieve low bias while respecting these coverage requirements, all while minimizing the additional data needed or the number of changes to existing data.

\textbf{Motivating example.} A major challenge in data fairness is ensuring that datasets used for analysis appropriately represent relevant demographic groups \cite{nargesian2021tailoring}. 
Training data that fails to adequately represent relevant populations has led to well-documented harms across ML applications \cite{hort2024bias}. Moreover, bias often manifests differently across intersections of demographic attributes, for instance, the experiences of Black women may not be captured by examining race and gender separately~\cite{crenshaw2013demarginalizing,buolamwini2018gender}.
Consider a property management company that decides to build an ML model to screen rental applications. The model predicts whether an applicant's income exceeds a threshold deemed sufficient to afford rent, and applicants predicted to fall below this threshold are automatically rejected. To train this model, the company uses the Adult dataset \cite{misc_adult_2}, a widely used dataset containing demographic attributes from over 48,000 individuals, including a label indicating whether annual income exceeds \$50,000. Upon inspection and consultation with domain experts, analysts run a bias analysis that results in Figure~\ref{fig:intro_adult_example} (a).
\begin{figure}[t]
    \centering
    \begin{subfigure}[c]{0.48\textwidth}
        \centering
        \includegraphics[width=\textwidth]{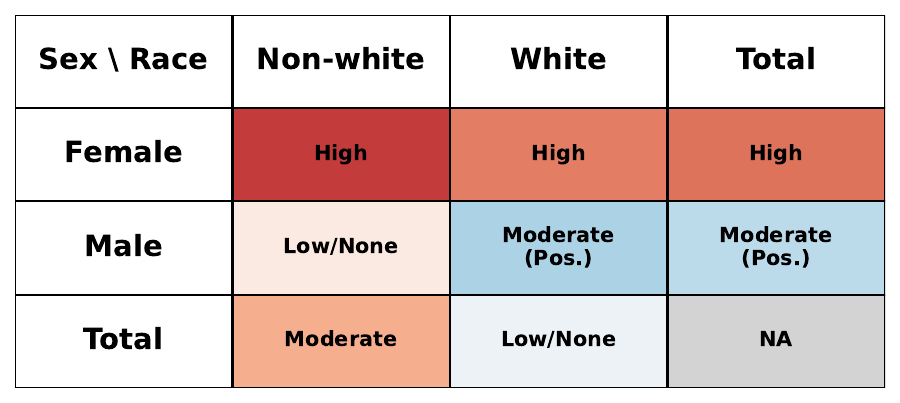}
    \end{subfigure}
    \hfill
    \begin{subfigure}[c]{0.43\textwidth}
        \centering
        \includegraphics[scale=.65]{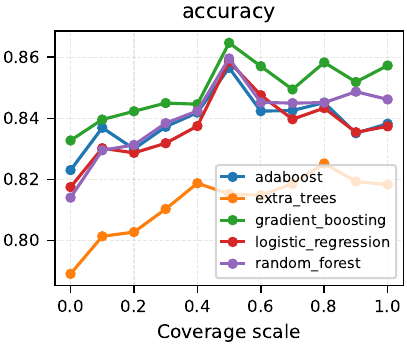}
    \end{subfigure}
    \vspace{-1mm}
    \caption{Motivating example: (a) Bias detected in training data. (b) Satisfying coverage improves ML models. \answerAfBc{Bias levels correspond to the Uniform Bias measure~\cite{scarone2025principled}, defined in Section~\ref{sec:preliminaries}. A coverage scale of $x$ requires retaining at least a fraction $x$ of the original tuples per group-label pair (Section~\ref{sec:price_of_fairness}).}}
    \Description{}
    \label{fig:intro_adult_example}
\end{figure}
The domain experts explain that the analysis reveals that data exhibits several bias patterns: while the dataset may appear relatively balanced when examining race or gender in isolation, data from Black women shows intersectional discrimination, where the combination of both race and gender produces unique disparities, and data on women shows both additive discrimination, where disadvantages accumulate independently across attributes, and intersectional effects.
A model trained on this data would likely underestimate income for these groups, systematically denying them access to housing.
The analysts wish to mitigate these biases before training their model. However, they face an additional challenge: for each demographic group, they must retain sufficient data to ensure the model learns meaningful patterns for that group. Simply removing majority-group samples until proportions balance could leave too few examples overall, while naive oversampling may introduce other distortions. Moreover, the analysts notice that training models on larger bias-mitigated datasets yields significant accuracy improvements, as shown in Figure~\ref{fig:intro_adult_example}. 
In this figure, one can see how requiring $x\%$ more training data for each group (x-axis) improves the accuracy of ML models (y-axis). 
The challenge is developing techniques that produce a mitigated dataset that simultaneously achieves low bias for all groups (even intersectional groups) while maintaining adequate coverage for all groups, ideally with minimal modifications to the dataset. In this work, the only data modifications we consider are data additions (adding new real tuples\answer{, a common practice in data integration, \eg retrieved from open data repositories, data lakes, or from data brokers~\cite{nargesian2021tailoring,chen2018my})}
or deletions (removing tuples). We address the problem of constructing  datasets that achieve both a fairness goal and are sufficiently representative of all groups (that is, satisfy coverage constraints). \answer{Moreover, permitting even modest deletions of overrepresented tuples can significantly reduce the number of additions required for fairness: for the Adult dataset, 1,059 strategic deletions reduce required additions from 4,201 to 1,599 (a 62\% reduction). This motivates why our ILP models both additions and deletions %
with potentially different costs.}

\textbf{Contributions.} 
The contributions of our work are summarized as follows.

\noindent (1) We extend the bias mitigation framework of Scarone et al.~\cite{scarone2025principled} to incorporate coverage constraints. We characterize all exact solutions, establish tight theoretical bounds on approximation errors, and design an algorithm that computes coverage-satisfying approximate solutions using the minimum amount of data for any given mitigation strategy. 
When the distributions of external data sources may be unknown, we derive \answer{sample complexity bounds based on Serfling's inequality} that guarantee accurate distribution estimates with substantially fewer samples.

\noindent (2) We formulate bias mitigation as an integer linear program (ILP) and characterize the price of fairness, the minimum data modification cost as a function of fairness tolerance, empirically validating the framework across multiple datasets. \answerHgb{This is essential for regulatory compliance, where mandates such as the EU AI Act require organizations to monitor bias in high-risk AI systems, effectively necessitating fairness thresholds, and for data governance, enabling practitioners to quantify the resource investment needed to meet a chosen standard.} %
Unlike prior work, our ILP finds the globally optimal strategy rather than fixing one in advance.

\noindent (3) We demonstrate empirically that bias mitigation via our ILP framework preserves predictive accuracy across multiple classifiers, and that coverage constraints, while motivated by statistical considerations, are essential for preventing  bias mitigation from shrinking or skewing the data to the point of degrading downstream ML performance. 
Note that our method explicitly handles non-binary label spaces (and multiple non-binary sensitive attributes), a setting underexplored in the bias mitigation literature. 
To the best of our knowledge, this is the first study to formally decouple fairness requirements from data representation guarantees in pre-processing bias mitigation, demonstrating that explicit coverage constraints are necessary to prevent pathological dataset reduction. We also show that fairness constraints alone, contrary to prevailing assumptions about fairness-accuracy trade-offs, need not degrade predictive performance.
\noindent
The rest of this paper is organized as follows. Section~\ref{sec:related_work} covers related work. Section~\ref{sec:preliminaries} introduces notation and datasets. Section~\ref{sec:UB_coverage_algorithm} presents our coverage-satisfying mitigation algorithm. Section~\ref{sec:price_of_fairness} formulates bias mitigation as an ILP and characterizes the price of fairness. Section~\ref{sec:ml_evaluation} evaluates how mitigation affects ML performance, and Section~\ref{sec:conclusions} concludes.

\section{Related work}\label{sec:related_work}

\answerAfBc{Our framework is built on a group fairness measure (Uniform Bias~\cite{scarone2025principled}) that handles intersectional, non-binary settings, and operates at the data level, following data-centric AI principles. %
Thus, we first discuss bias measures and the choice of group fairness, then intersectionality, then representation bias and
data-centric approaches for coverage.}
Researchers have proposed several measures~\cite{mehrabi2021survey, hort2024bias} to quantify bias. These measures can be classified into individual and group (statistical) measures. 
Individual fairness~\cite{dwork_fairness_through_aw} requires that similar individuals receive similar treatment, though fundamental objections have been raised regarding its assumptions and operationalization~\cite{fleisher2021individual}.
Group fairness focuses on treating different demographic groups equally.
Fleisher~\cite{fleisher2023algorithmic} argues that group fairness criteria provide defeasible evidence for system fairness rather than logical conditions. %
Hertweck et al.~\cite{hertweck2021moral} argue that metric selection requires engagement with moral philosophy, and Friedler et al.~\cite{friedler2021possibility} show that under a structural bias worldview, group fairness becomes the only non-discriminatory approach.
The heterogeneous nature of these measures makes comparison difficult~\cite{lum2022debiasing,quinonero2023disentangling}. Žliobaite~\cite{vzliobaite2017measuring} categorizes statistical measures, providing foundational notions of dataset fairness, while Yeh et al.~\cite{10.1145/3630106.3658922} establish guidelines for ratio- and difference-based measure families.
Scarone et al.~\cite{scarone2025principled} introduce Uniform Bias (UB), an interpretable measure that handles multi-valued sensitive attributes and labels, addressing gaps identified by recent surveys~\cite{hort2024bias}.
Crenshaw introduced intersectionality~\cite{crenshaw2013demarginalizing}, establishing that discrimination against Black women exceeds the sum of racism and sexism. In data fairness, datasets can satisfy group fairness for gender and race separately while failing at their intersections~\cite{kearns2019anempiricalstudy}. Algorithmic approaches include fairness gerrymandering~\cite{kearns2018preventing} and multicalibration~\cite{hebert2018multicalibration}. Buolamwini and Gebru~\cite{buolamwini2018gender} empirically showed intersectional bias in facial recognition. %
Foulds et al.~\cite{foulds2020intersectional} propose bounding classifier outcome ratios across intersectional groups, while Wang et al.~\cite{wang2022towards} note that fair ML has historically focused on single binary attributes. Kanubala and Valera~\cite{kanubala2025misalignment} consider intersectionality only for two binary sensitive attributes and a binary label, without coverage constraints. 
We address multiple non-binary sensitive attributes, any number of labels, and coverage constraints for intersectional groups.
Data-driven algorithms are only as good as the data they work with \cite{barocas2016big,shahbazi2023representation}. Shahbazi et al.~\cite{shahbazi2023representation} survey techniques for identifying and resolving representation bias, noting that systems may fail for underrepresented subgroups even when performing well overall. While prior work has examined how removing data patterns affects bias \cite{pradhan2022interpretable,shahbazi2023representation}, these approaches implicitly sacrifice coverage to reduce bias. We address the converse: quantifying the impact on fairness when coverage must be preserved.
The data-centric AI (DCAI) paradigm~\cite{jarrahi2023principles} emphasizes systematic improvement of data fit, including adequate coverage and absence of biases. Chen et al.~\cite{chen2018my} argue that unfairness from inadequate sample sizes should be addressed through data collection rather than model constraints, finding that additional samples often suffice to improve fairness without sacrificing accuracy.
Nargesian et al.~\cite{nargesian2021tailoring} propose integrating data from multiple sources cost-effectively such that subgroups meet user-specified count distributions for resolving insufficient representation. Kleinberg et al.~\cite{kleinberg2017inherent} prove that when there is an unequal base rate in data, satisfying different fairness measures simultaneously becomes impossible, connecting representation bias directly to fairness impossibility theorems. When adding data is infeasible, alternatives include informing users about representation bias or identifying the fewest additional points needed to cover uncovered spaces~\cite{asudeh2019assessing}. 
We extend the bias mitigation framework of Scarone et al.~\cite{scarone2025principled} to incorporate coverage constraints, addressing the representation bias concerns raised by Shahbazi et al.~\cite{shahbazi2023representation} and Chen et al.~\cite{chen2018my}. %
Our work connects the DCAI principles of adequate coverage~\cite{jarrahi2023principles} with principled bias mitigation.
Unlike the work of Nargesian et al.~\cite{nargesian2021tailoring}, we identify the data distribution required for a given fairness goal (the user does not need to provide it) and precisely characterize the number of data operations (such as data additions or deletions) needed to achieve it. For bias mitigation in the known source distribution setting, our algorithm efficiently computes coverage-satisfying approximate solutions using the minimum amount of data for a fixed mitigation strategy when both additions and deletions have the same cost. For the general case with arbitrary costs and optimization budgets, we formulate bias mitigation as an ILP that finds globally optimal solutions for a given fairness tolerance (meaning a bias bound). Crucially, while both prior work~\cite{scarone2025principled} and our closed-form algorithm fix a single mitigation strategy, our ILP optimizes simultaneously over all possible strategies, identifying the globally optimal combination of additions and deletions across all group-label pairs for a given objective. Unlike optimization approaches that constrain model training~\cite{zafar2017fairness}, our ILP operates at the data level. 
We also empirically demonstrate that the ILP framework preserves predictive accuracy across multiple ML classifiers. Finally, in the unknown source distribution setting, we propose an efficient sampling scheme with theoretical guarantees verified on real-world datasets.

\section{Preliminaries}\label{sec:preliminaries}
We consider the classical setting of algorithmic fairness where we have a classification task (\eg hiring candidates), in which tuples have a label (or class) attribute $Y$ and multiple sensitive attributes $\mathbf{S} = \langle S_1, \ldots, S_k \rangle$. Both $Y$ and each $S_i$ may take non-binary values.
We denote variables (\ie dataset attributes) by uppercase letters $X, Y, Z$; their values by lowercase letters $x, y, z$; and vectors of variables (or values) using boldface ($\mathbf{X}$ or $\mathbf{x}$). The domain of variable $X$ is $\Dom(X)$, and the domain of a vector of variables is $\Dom(\mathbf{X}) = \prod_{X \in \mathbf{X}} \Dom(X)$.

When analyzing bias across multiple sensitive attributes, we need notation to specify groups at varying levels of granularity. Building on Scarone et al. \cite{scarone2025principled}, we denote a group as a $k$-tuple over the $k$ sensitive attributes where each entry is either a specific value or the wildcard symbol $*$ (read as ``all'').\footnote{This follows a common convention in the multidimensional data analysis literature (OLAP), where the symbol $\top$ is sometimes used with the same meaning as our $*$ symbol.} A wildcard indicates marginalization over that attribute. For example, with $\mathbf{S} = \langle G, R, A \rangle$ representing gender, race, and age: $\langle \text{male}, *, * \rangle$ refers to all male individuals regardless of race or age; $\langle \text{male}, \text{white}, * \rangle$ refers to white males across all ages; and $\langle \text{male}, \text{white}, \text{young} \rangle$ is a fully specified group. %
A group with one or more wildcards is called an {\em aggregate group}. A group with no wildcards is called \emph{fully specified}. %
The fully wildcarded tuple $\langle *, \ldots, * \rangle$ represents the entire population. When computing the domain of a vector with wildcards, we ignore those entries (\eg the domain of $\langle \text{Gender},*\rangle$ is the same as the domain of $\langle \text{Gender}\rangle$). %
Let $n$ denote the total number of tuples in the dataset. For a group $\mathbf{s} \in \Dom(\mathbf{S})$ and label value $y \in \Dom(Y)$, we define: $[\mathbf{s}y]$ as the number of tuples with sensitive attributes $\mathbf{s}$ and label $y$; $\mathbf{s} = \sum_{y} [\mathbf{s}y]$ which is the number of tuples in group $\mathbf{s}$ and analogously $y = \sum_{\mathbf{s}} [\mathbf{s}y]$ representing the number of tuples with label $y$. \answerEqyy{Following Scarone et al.~\cite{scarone2025principled}, to simplify notation, when a value such as $\mathbf{s}$ or $y$ appears in a formula, it
denotes the count of tuples with that value (\ie  $|\mathbf{s}|$ and $|y|$ respectively). When clear from context, we drop brackets and wildcards, \eg
$\langle \text{male}, * \rangle$ is written as $\text{male}$.} Note that \textbf{s} may be a fully specified group or an aggregate group (such as all men). The corresponding relative frequencies are $f_{\mathbf{s}, y} = [\mathbf{s}y] / \mathbf{s}$ (the proportion of group $\mathbf{s}$ with label $y$) and $f_{y} = y / n$ (the global proportion with label $y$). 

Our bias mitigation framework is built upon the \emph{Uniform Bias} (UB) measure~\cite{scarone2025principled}. For a group $\mathbf{s}$ and label $y$, UB is defined as: $b_{\mathbf{s}, y} = 1 - f_{\mathbf{s}, y}/f_{y}$. This measure ranges from $-\infty$ to $1$. When $b_{\mathbf{s}, y} =0$, the group \textbf{s} is unbiased w.r.t. label $y$, which happens when the proportion of label $y$ within group $\textbf{s}$ matches the global proportion ($f_{\textbf{s},y}=f_y$). When $\textbf{s}=0$ or $y=0$, then $b_{\mathbf{s}, y} =0$. Intuitively, $b_{\mathbf{s}, y} >0$  indicates underrepresentation of label $y$ in group $\mathbf{s}$ (negative bias); and $b_{\mathbf{s}, y} <0$ indicates overrepresentation (positive bias). We refer the reader to Scarone et al.~\cite{scarone2025principled} for a complete treatment of the measure's properties and comparison to existing metrics. We will sometimes require rounding of a real value $x$ to the nearest larger integer $\lceil x \rceil$, or to the nearest integer $\lfloor x\rceil$.

\subsection{Data}
\label{sec:data}

We now briefly outline the datasets used in our examples and empirical evaluation. We employ three datasets that are \answerHgb{standard benchmarks} %
in algorithmic fairness research \cite{hort2024bias} \answerHgb{that enable comparison with prior work}: COMPAS~\cite{ds_compas}, Adult~\cite{misc_adult_2}, and Default~\cite{misc_default_of_credit_card_clients_350} (Table~\ref{tab:datasets}). These datasets vary in size, number of intersectional groups, label or class balance, and degree of initial bias, enabling us to assess whether observed patterns generalize across different data characteristics. The group-label counts and initial Uniform Bias values for each dataset are included in Appendix~\ref{app:datasets}.

The COMPAS dataset contains criminal history and demographic information collected by ProPublica in their analysis of the COMPAS recidivism prediction instrument. We use the raw COMPAS scores with the ternary risk label (Low, Medium, High). 
The Adult dataset, drawn from the 1994 U.S.\ Census, is a standard benchmark for income prediction. The task is to predict whether an individual's annual income exceeds \$50,000. %
The Default of Credit Card Clients dataset contains payment and demographic information for credit card holders in Taiwan. The prediction task is whether a client will default on their payment in the following month. \answerHgb{The Default dataset uses sex and
education level rather than race as sensitive attributes, demonstrating that our framework applies to any categorical grouping, \ie it operates on group-label counts and is agnostic to the semantics of the attributes.} %

\begin{table}[t]
\centering
\begin{tabular}{lrrrcc}
\toprule
\textbf{Dataset} & \textbf{n} & \textbf{Groups} & \textbf{Labels} & \textbf{Sensitive Attributes} & \textbf{\% Positive} \\
\midrule
COMPAS  & 60,798 & 4 & 3 & sex, race (binary) & N/A (ternary)   \\
Adult   & 48,842 & 4 & 2 & sex, race (binary) & 23.9\% \\
Default & 30,000 & 8 & 2 & sex, education     & 22.1\% \\
\bottomrule
\end{tabular}
\caption{Summary of datasets. The ``Groups'' column refers to the number of fully specified groups.
}
\label{tab:datasets}
\end{table}

\section{Mitigation under coverage constraints}
\label{sec:UB_coverage_algorithm}
In this section, we extend the bias mitigation framework of Scarone et al.~\cite{scarone2025principled} to incorporate coverage constraints.\footnote{\answerHgb{We illustrate the closed-form solutions using the COMPAS dataset; these solutions apply to any dataset given its group-label counts.}} In the rest of the paper we will denote the number of tuples to be added or deleted from a fully specified group $\textbf{s}$ with label $y$ by $\Delta[\textbf{s}y]$~\cite{scarone2025principled}.

\subsection{Mitigation strategies and solutions}

A fairness goal specifies the target bias for each group-label pair  that we wish to achieve. We assume a target of zero bias; this can be relaxed to non-zero targets as shown in prior work~\cite{scarone2025principled} (see Appendix~\ref{app:nonzero_bias_condition}).
A mitigation {\em solution}, \ie the number of tuples to be added or deleted from each group-label pair to achieve a desired fairness goal ($\{\Delta[\textbf{s}y]\}$), is obtained by solving the linear system given by the equations 
\begin{equation}\label{eq:UB_mitigation_linear_system}
    \Delta[\textbf{s}y] = %
    -[\textbf{s}y] + y/y_i \cdot ([\textbf{s}y_i] + \Delta[\textbf{s}y_i])%
\end{equation}
for every group-label pair $(\textbf{s},y)$. Here, $y_i$ is a user-chosen reference label for each group $\mathbf{s}$. The user provides $\Delta[\mathbf{s}y_i]$ directly, the number of additions or deletions for $(\mathbf{s}, y_i)$, and the remaining $\Delta[\mathbf{s}y]$ values are uniquely determined\footnote{%
The system is underspecified and the selection of $y_i$ determines its free variables. Note that when $y=y_i$, Equation~\ref{eq:UB_mitigation_linear_system} becomes trivial.} to satisfy the fairness goal. We call this choice of reference labels a \emph{mitigation strategy}. We say that a solution is {\em valid} if when $\Delta[\textbf{s}y] < 0$ then $|\Delta[\textbf{s}y]|\leq[\textbf{s}y]$ (we do not delete more tuples than available).
\begin{example}\label{ex:mitigation_toy}
    Suppose we have a binary gender attribute with values $m$ and $w$, and a binary label with values $T$ and $F$, with the following group-label counts: $mT=3$, $mF=1$, $wT=1$ and $wF=2$. The linear system is given by four equations.
    To fully specify it, we also need to select a mitigation strategy, \ie one reference label $y_i$ per group.
    There are two options for picking $y_i$ (meaning two labels) for each of the two groups, so four possible mitigation strategies. If for group $\textbf{s}=\answerEqyy{\langle m,*\rangle}$ we select $y_i=T$ and for $\textbf{s}=\answerEqyy{\langle w,*\rangle}$ we take $y_i=F$, the resulting system (for this strategy) is: %
    \[\begin{cases}
        \Delta [mF] &= -[mF]+F/T\cdot([mT]+\Delta[mT]) = -1+3/4 \cdot (3+\Delta[mT])\\
        \Delta [wT] &= -[wT]+T/F\cdot([wF]+\Delta[wF]) = -1 + 4/3 \cdot(2+\Delta[wF])
    \end{cases}\]
    Both $\Delta [mT]$ and $\Delta [wF]$ are the free variables of the system, meaning that assigning them a value, fully determines a solution. Here, for example, if $\Delta [mT]=1$ and $\Delta [wF]=1$, then $\Delta [mF]=2$ and $\Delta[wT]=3$ completes the bias mitigation solution. The resulting dataset is unbiased, since both groups have four positive and three negative tuples. This solution is valid because it does not delete tuples.
\end{example}

Scarone et al.~\cite{scarone2025principled} suggest always setting $i=\arg\max_{y_j}[\textbf{s}y_j]/y_j$ which ensures that the mitigation strategy includes additions only (hence $\Delta[\textbf{s}y]\geq0$ for all $(\textbf{s},y)$). However, in our work, we permit deletions and we will consider new principled ways of comparing different mitigation strategies in Section~\ref{sec:price_of_fairness}.

We start by determining all exact solutions,\ie those achieving zero Uniform Bias.
Solving the system introduced earlier, guarantees that the mitigation achieves the desired bias level. But it may yield non-integral solutions. For $\Delta [\textbf{s}y]$ to be an integer, we need $\Delta [\textbf{s}y_i]$ to be chosen such that $y/y_i \cdot ([\textbf{s}y_i] + \Delta[\textbf{s}y_i])$ is an integer as well. This is equivalent to requiring $y\cdot ([sy_i] + \Delta[\textbf{s}y_i])$ to be a multiple of $y_i$, which means\footnote{We assume $y/y_i$ is in its most simplified form, if not, it suffices to divide both by their greatest common divisor.} that $\Delta[\textbf{s}y_i] +[sy_i] =  k\cdot y_i$ for an integer $k$. Adding this constraint into our Equation~\ref{eq:UB_mitigation_linear_system}, we get $\Delta[\textbf{s}y] = -[\textbf{s}y] + k\cdot y$ for an integer $k$.  If we consider additions only (no deletions), the minimal (non-empty) exact solution is when $k=1$. Note that since $[\textbf{s}y]\leq y$ when $k=1$, $\Delta [\textbf{s}y]$ is non-negative for all groups and labels (\ie no deletions). If we were to allow $k=0$, then $\Delta[\textbf{s}y]=-[\textbf{s}y]$, so the entire table disappears, something that is trivially fair but uninteresting.

\begin{example}\label{ex:exact_compas}
    For the COMPAS dataset we have three labels:  L, M, and H. 
    So $\Delta\textbf{s} = \Delta[\textbf{s}H] + \Delta[\textbf{s}M]+\Delta[\textbf{s}L]$, and the resulting frequency after mitigation with $\Delta[\textbf{s}y] = -[\textbf{s}y] + y$, denoted $f^{\text{new}}_{\textbf{s},y}$, is
    \begin{align*}
        f^{\text{new}}_{\textbf{s},y} &= \frac{[\textbf{s}y]+\Delta[\textbf{s}y]}{\textbf{s}+\Delta\textbf{s}} = \frac{[\textbf{s}y]-[\textbf{s}y]+y}{\textbf{s}+\Delta[\textbf{s}H] + \Delta[\textbf{s}M]+\Delta[\textbf{s}L]} = \frac{y}{\textbf{s}-[\textbf{s}H]+H-[\textbf{s}M]+M-[\textbf{s}L]+L} %
        = \frac{y}{n} 
        =f_y
    \end{align*}
    Which means that $\Delta[\textbf{s}y]=-[\textbf{s}y]+y$ is a solution (the minimal one, for $k=1$), \ie produces an unbiased dataset. For the COMPAS dataset, we have four fully specified groups: (Non-White, women), (Non-White, men), (White, women) and (White, men). The smallest exact solution requires $182,394$ tuple additions. White men with a high risk score require the least additions ($5,550$) and White females with a low score the most additions ($37,328$), the full solution is included in Appendix~\ref{app:compas_coverage}. Notice that this solution provides fairness (that is, is unbiased for all twelve group-label pairs). However, it requires what a data scientist may consider a prohibitively large amount of additional data. We consider how to reduce this amount in a principled way in Section~\ref{sec:approx}.
\end{example}
\subsection{Coverage constraints}\label{sec:coverage} 

A \emph{coverage constraint} specifies the minimum \answerAfBc{absolute count} of tuples that a mitigated dataset must \answerAfBc{retain}. Formally, given a coverage threshold $m_{\textbf{s},y} \geq 1$ for each fully-specified group-label pair $(\textbf{s},y)$, we require that the mitigated dataset satisfies $[\textbf{s}y]+\Delta[\textbf{s}y]\geq m_{\textbf{s},y}$. A solution is {\em valid} with respect to coverage constraints if each group-label pair $(\textbf{s},y)$ retains at least $m_{\textbf{s},y}$ tuples.
\begin{example}
    The solution in Example~\ref{ex:mitigation_toy} is valid for the constraint $m_{\textbf{s},y}=3$ for every group-label $(\textbf{s},y)$, but invalid for the constraint $m_{\textbf{s},y}=4$, since in the mitigated data there are only three tuples for $s=m$ with label $F$.
\end{example}
The exact solutions satisfying the coverage constraint with $m_{\textbf{s},y}$ are given by $k^*\geq \lceil m_{\textbf{s},y}/y\rceil$, the minimal exact solution (that is, the most data-efficient exact solution) is the one satisfying $k^*= \lceil m_{\textbf{s},y}/y\rceil$. 

\subsection{Approximate solutions under coverage constraints}\label{sec:approx} 
Notice that exact solutions can be very large, and by permitting approximation (a small amount of bias), we can find much more data-efficient (smaller) mitigation solutions.
Scarone et al.~\cite{scarone2025principled} proposed applying rounding to Equation~\ref{eq:UB_mitigation_linear_system}, which introduces an error in the bias mitigation.  In our work, we provide a more rigorous analysis of such approximation error. Our analysis permits the extension of approximation to mitigation settings with coverage constraints.

Our key insight is to leverage the fact that the user can pick the number of operations for $(\textbf{s},y_i)$ in the mitigation. Thus, we can determine the minimum value of $\Delta[\textbf{s}y_i]$ for the coverage constraint to be satisfied. 
A coverage constraint is satisfied when $\Delta[\textbf{s}y_i]\geq\frac{y_i}{y}m_{\textbf{s},y}-[\textbf{s}y_i]$. Since we have one label $y_i$ per group $\textbf{s}$, we need to consider the constraints for all the labels. Thus, the smallest integral value to guarantee the coverage constraint is given by $\Delta [\textbf{s}y_i] = \max_{y}\Big\{\Big\lceil \frac{y_i}{y}m_{\textbf{s},y}-[\textbf{s}y_i]\Big\rceil\Big\}$ for each group-label pair.\footnote{If we want to enforce $\Delta [\textbf{s}y_i]\geq 0$, as was done in Scarone et al.~\cite{scarone2025principled}, it suffices to add $\{0\}$ before computing the maximum.}
After computing $\Delta [\textbf{s}y_i]$ in this way, we can use Equation~\ref{eq:UB_mitigation_linear_system} to compute the values of $\Delta[\textbf{s}y]$, and rounding them produces the output of our new mitigation algorithm satisfying the coverage constraints. 
The same approach can be used to accommodate upper bound constraints of the form $[\textbf{s}y]+\Delta [\textbf{s}y] \leq M_{\textbf{s},y}$, for both exact and approximate solutions (Appendix~\ref{app:upper_bound_constraints}).

\begin{example}
    Continuing Example~\ref{ex:exact_compas}, applying our mitigation algorithm to the COMPAS dataset with coverage constraint $m_{\textbf{s},y}=1,000$ for every $\textbf{s}$ and $y$, and free variable selection $i=\arg\max_j [\textbf{s}y_j]/y_j$ 
    we achieve an average resulting %
    bias of $0.00014$. Complete results are provided in Appendix~\ref{app:compas_coverage}. 
\end{example}

Now, we analyze the quality of the approximate solutions of the coverage-satisfying mitigation algorithm. Earlier, we derived a closed form for all exact solutions to the problem. These solutions generally require large amounts of data, since $|y|$ is typically large. Given that acquiring suitable data is costly, finding  good approximate solutions is of utmost importance. 
When rounding the values produced by Equation~\ref{eq:UB_mitigation_linear_system}, we introduce a {\em rounding error}, which is always in $[0,1)$. In general, the smaller the sizes of the groups, the more significant this %
error is. Ultimately, we are not interested in the rounding error that occurs when estimating $\Delta [\textbf{s}y]$, but on how much this error affects the frequencies, %
\ie {\em frequency error}: the difference between the estimated frequency for a group-label and the objective, $|f^{\text{ap}}_{\textbf{s},y}-f_y|$. 
We care about the frequency error as it determines the (non-zero) bias remaining in the approximate solution. We know that when using an exact solution, denoted by $\Delta^{\text{ex}}_{\textbf{s},y}$, we get a resulting frequency $f^{\text{ex}}_{\textbf{s},y}$ that satisfies $f^{\text{ex}}_{\textbf{s},y}=f_y$ and thus there is no error (and no bias). In the worst case, using an approximate solution $\Delta^{\text{ap}}_{\textbf{s},y}$, one gets a rounding error of one for the group-label and of $k$ for the group (summing over all labels) and thus in the worst case the error is at most $E=\frac{[\textbf{s}y]+\Delta^{\text{ex}}_{\textbf{s},y}+1}{\textbf{s}+\Delta^{\text{ex}}_{\textbf{s}}+k}-\frac{[\textbf{s}y]+\Delta^{\text{ex}}_{\textbf{s},y}}{\textbf{s}+\Delta^{\text{ex}}_{\textbf{s}}}$. 

\begin{theorem}[Error bounds for approximate bias mitigation]\label{thm:approx_errors_bounds}
Our approximate mitigation solutions satisfy the following bounds on the maximal frequency error:
$
\frac{1-k}{\textbf{s}+\Delta^{\text{ex}}_{\textbf{s}}+k} \;\leq\; E \;\leq\; \frac{1}{\textbf{s}+\Delta^{\text{ex}}_{\textbf{s}}+k}
$. Both bounds are tight.
\end{theorem}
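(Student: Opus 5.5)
We write $A=[\textbf{s}y]+\Delta^{\text{ex}}_{\textbf{s},y}$ and $B=\textbf{s}+\Delta^{\text{ex}}_{\textbf{s}}$, so that the exact frequency is $A/B=f_y$ with $0\le A\le B$. Here $k$ denotes the number of labels, i.e.\ the number of terms summed into $\Delta_{\textbf{s}}$. The plan is to treat the frequency error as a function of the rounding perturbations rather than only the single worst-case expression. Let $\epsilon_{y'}$ be the change that rounding introduces in $\Delta_{\textbf{s},y'}$ for each label $y'$, and assume (as the text states) that each $\epsilon_{y'}$ lies in $[0,1)$, so the worst case is taken at the closure $\epsilon_{y'}\in[0,1]$. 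The approximate frequency is then
\[
f^{\text{ap}}_{\textbf{s},y}=\frac{A+\epsilon_y}{B+\sum_{y'}\epsilon_{y'}},
\]
and the error is $f^{\text{ap}}_{\textbf{s},y}-A/B$. The quantity $E$ in the statement is the value of this error at the configuration where every $\epsilon_{y'}=1$. The task is to bound it in terms of $B$ and $k$.

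First, simplify $E$ algebraically. Putting everything over a common denominator gives
\[
E=\frac{(A+1)B-A(B+k)}{B(B+k)}=\frac{B-kA}{B(B+k)}=\frac{1-kf_y}{B+k}.
\]
The two bounds now follow from the range of $f_y$. Since $f_y\in[0,1]$, the numerator $1-kf_y$ lies in $[1-k,1]$, and dividing by the positive quantity $B+k$ gives exactly
\[
\frac{1-k}{B+k}\le E\le\frac{1}{B+k}.
\]

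Next, establish tightness by exhibiting configurations at the endpoints. The upper bound is attained when $f_y=0$, for example when label $y$ is absent. The lower bound is attained when $f_y=1$, meaning the group contains only label $y$.

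If the bound is meant to cover the maximal error over all rounding patterns, not just the configuration where every $\epsilon_{y'}=1$, there is an additional step. Here I would argue that the expression is monotone in each $\epsilon_{y'}$ separately. It increases in $\epsilon_y$, with numerator derivative proportional to $B+\sum_{y'}\epsilon_{y'}-A-\epsilon_y\ge 0$. It decreases in each $\epsilon_{y'}$ with $y'\neq y$. Its extremes are therefore attained at vertices of $[0,1]^k$. Evaluating at the extreme vertices gives the following:
\begin{itemize}
\item $\epsilon_y=1$ and the others $0$ yields $(1-f_y)/(B+1)\le 1/(B+k)$ only when $k=1$. In general it is bounded by $1/(B+1)$.
\end{itemize}
This is the main obstacle. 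The statement's bounds are naturally those for the specific worst-case quantity $E$ defined just before it, not for an arbitrary rounding pattern. I would therefore state the proof for $E$ as defined, that is, with all $k$ labels rounded up by one, where the computation above is clean. I would remark that the general vertex analysis gives analogous bounds with a different denominator.

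Finally, for tightness I would state the two limiting configurations explicitly, $f_y\to 0$ and $f_y\to1$, and note that they are realizable, or at least approachable, by group-label counts.
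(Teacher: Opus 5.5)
Your argument is correct and is essentially the paper's proof. The paper (through its general-form theorem with $n=a+c$, $m=b+d$) rewrites $E=\frac{m-nk}{m(m+k)}$ and uses $0\le n\le m$, which is exactly your $E=\frac{1-kf_y}{B+k}$ with $A/B\in[0,1]$. It certifies tightness with the explicit integer instances $(a,b,c,d)=(1,1,-1,1)$ and $(1,1,1,1)$, i.e.\ $A=0$ and $A=B$; equality holds exactly there, so you can drop the ``approachable'' hedge.

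Your side observation is also sound. The bounds hold for the specific quantity $E$ in which all $k$ labels are perturbed by one, not for arbitrary rounding patterns. For example, $\epsilon_y=1$ with all other perturbations $0$ gives $\frac{1-f_y}{B+1}$, which exceeds $\frac{1}{B+k}$ for small $f_y$ when $k>1$.
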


\begin{example}
    For example, with $m=\textbf{s}+\Delta^{\text{ex}}_{\textbf{s}}\geq 1100$ and $k\leq 10$, the upper bound becomes $\frac{1}{m+k}\leq \frac{1}{1100}\approx0.0009$
    and the lower bound:$\frac{1-k}{m+k}\geq \frac{1-10}{1100+10}=\frac{-9}{1110}\approx-0.0081$. So, the error is approximately bounded in $[-0.008,0.001]$.
\end{example}
When $m=\textbf{s}+\Delta_\textbf{s}^{\text{ex}}\gg k$, \ie the number of tuples from group $\textbf{s}$ in an exact solution is much bigger than the number of labels, which is the case in practical applications, the bounds of Theorem~\ref{thm:approx_errors_bounds_general_deltas} behave like $[-\frac{k}{m},\frac{1}{m}]$, both being 
$O(1/m)=O(1/\textbf{s}+\Delta^{\text{ex}}_{\textbf{s}})$
, which is remarkably small. And importantly, this small frequency error, translates to a very small bias that is introduced by approximation into a mitigated data solution.
\subsection{\answer{Sample complexity for external source distribution estimation}}\label{sec:coverage_source_estimation}%

\answer{This section addresses a problem distinct from the coverage constraints of Section~\ref{sec:coverage}: whereas coverage constraints guarantee minimum representation in the \emph{output} dataset, here we determine how many samples are needed to reliably estimate an external source's distribution \emph{before} optimization. This per-source estimation is
necessary to correctly parameterize any bias mitigation algorithm, %
which requires knowing the distribution of available (external) data.}
Consider a data scientist who wishes to mitigate bias by incorporating samples from  external sources. If the external source's distribution were known, one could directly apply the techniques described in Section~\ref{sec:approx} or Section~\ref{sec:price_of_fairness}. In practice, however, external sources (\eg data brokers) may not disclose distributions. So the data scientist must estimate this distribution from samples, raising the question: {\em how many samples suffice?} Without principled guidance, one risks either undersampling (producing unreliable estimates) or oversampling (wasting resources that may be costly or limited). We address this by deriving coverage thresholds from statistical concentration bounds, specifying the minimum samples needed to guarantee accurate distribution estimates with high probability. Most notably, the sample complexity is independent of the dataset size, depending only logarithmically on the number of group-label pairs and the confidence parameter.

Consider uniformly sampling $n$ tuples from a  table of $N$ tuples, partitioned into $c$ distinct group-label pairs. Let $p_i$ denote the true proportion of a given group-label $i$ in the table, where $\sum_{i=1}^c p_i = 1$. We draw a uniform random sample of $n$ tuples without replacement. For each group-label $i$, let $X_i$ denote the count of tuples from  $i$ in the sample, with $\sum_{i=1}^c X_i = n$, and let $\bar{X}_i = X_i / n$ denote the corresponding sample proportion. To analyze the distribution of $X_i$, we introduce indicator random variables $Y_1^{(i)}, \dots, Y_n^{(i)}$, where $Y_j^{(i)} = 1$ if the $j$-th sampled tuple belongs to group $i$ and $Y_j^{(i)} = 0$ otherwise, so that $X_i = \sum_{j=1}^n Y_j^{(i)}$. Since we sample without replacement, the indicator random variables $Y_1^{(i)}, \dots, Y_n^{(i)}$ are not independent. We therefore apply Serfling's inequality~\cite{serfling1974probability, bardenet2015concentration}, which extends Hoeffding's bound to sampling without replacement from finite populations. We include the formal statements of these classic results and other supplementary materials in Appendix~\ref{app:coverage_LB_estimation}.
Applying Serfling's inequality to the sample proportion $\bar{X}_i = X_i / n$, we obtain for any $\epsilon > 0$ a bound on the error probability: $\Pr[|\bar{X}_i - p_i| \geq \epsilon] \leq 2\exp\left(-\frac{2n\epsilon^2}{1 - \frac{n-1}{N}}\right)$, where $E[\bar{X}_i] = p_i$ holds. Now, if we look at the event $\max_{i=1,\dots,c} |\bar{X}_i - p_i| \geq \epsilon$, this is equivalent to the union of the events $\{|\bar{X}_i - p_i|\}$. Using a union bound, we get $\Pr\left[\max_{i=1,\dots,c} |\bar{X}_i - p_i| \geq \epsilon\right] \leq 2c\exp\left(-\frac{2n\epsilon^2}{1 - \frac{n-1}{N}}\right)$, which upper bounds the total probability of error of our estimation. Setting the upper bound equal to $\delta$ and solving for $n$, we obtain $n =  \frac{(N + 1)\ln(2c/\delta)}{\ln(2c/\delta) + 2\epsilon^2 N}$. This is the smallest sample size required to achieve $\epsilon$-accuracy with error probability at most $\delta$. The sample complexity is $O(\log(c/\delta)/\epsilon^2)$, independent of the dataset size $N$, making the approach scalable to large datasets (see Appendix~\ref{app:sample-complexity} for the derivation). %

We implement the sampling procedure with $\epsilon = 0.05$ and $\delta = 0.05$ for the COMPAS dataset. Using our methods, the required sample size is $n = 1210$, which is $2\%$ of the total dataset. We repeat the experiment $100$ times. For each repetition, we compute the estimation error $|\bar{X}_i - p_i|$ for every group-label pair $i$. All observed errors are well below the theoretical bound of $\epsilon = 5\%$, with the average error across  groups being $0.52\%$ and the largest $1.04\%$ (for non-white men with a medium risk score). The full list is included in Appendix~\ref{app:coverage_LB_estimation}.
\section{The price of fairness}\label{sec:price_of_fairness}

Achieving fairness in data on a biased dataset requires modification, whether through tuple additions, deletions, or both, and these modifications come at a cost. A natural question arises: \emph{how much does fairness cost?} More precisely, what is the minimum amount of data modification required to achieve a given level of fairness, and how does this cost vary as fairness requirements become stricter?\footnote{\answerHgb{The ``price of fairness'' quantifies the cost of
\emph{achieving} a fairness standard, not the cost of tolerating bias or discrimination. When a regulation specifies a fairness standard (typically operationalized as a value of~$\epsilon$), our framework tells practitioners the (minimum) data modifications required to satisfy it.}} The algorithm developed in Section~\ref{sec:UB_coverage_algorithm} answers this question for a fixed mitigation strategy and prior work also only considers fixed strategies~\cite{scarone2025principled}.\footnote{Recall that a mitigation strategy is a choice of reference label $y_i$ (free variable) per group $\textbf{s}$ in the linear system of equations (Section~\ref{sec:coverage}).}  Our algorithm permits the computation of coverage-satisfying approximate solutions using a minimum amount of data for one choice of $y_i$ per group. However, different mitigation strategies may yield different costs, and practitioners may wish to optimize over all possible strategies simultaneously, incorporate budget constraints, or explore trade-offs between competing objectives such as minimizing additions versus minimizing deletions. To address these needs, we formulate bias mitigation as an integer linear program (ILP) that finds globally optimal solutions across all strategies. The %
structure of the ILP is: %
\begin{align}
    \min \quad & f(\{[\textbf{s}y]\},\{\Delta[\textbf{s}y]\}) & & \text{(minimize objective)} \\
    \text{s.t.} \quad  &  |f^{\text{new}}_{\textbf{s},y}-f_y|\leq\epsilon & \forall \textbf{s}, y \quad & \text{(approximate fairness)} \\
    & [\textbf{s}y]+\Delta[\textbf{s}y] \geq m_{\textbf{s},y} & \forall \textbf{s}, y \quad & \text{(coverage guarantee)} \\ %
    & \sum_{\textbf{s},y} \text{cost}(c_a,c_d,\Delta [\textbf{s}y])\leq B & \quad & \text{(costs and budget)} %
\end{align}
\noindent
As before, for each %
fully-specified group $\textbf{s}$ and outcome $y$, $[\textbf{s}y]$ denotes the original count and $\Delta[\textbf{s}y] \in \mathbb{Z}$ the decision variable representing the change to that group-label. %
The objective $f$ minimizes a function of the resulting counts, \eg  total dataset size or minimal change to the data. The fairness constraint bounds the deviation between each group-label's outcome distribution and the global target distribution $f_y$, parameterized by a \textbf{fairness tolerance} $\epsilon$: when $\epsilon = 0$, we require exact fairness (in this case it coincides with demographic parity); larger values permit approximate fairness in exchange for fewer data changes.\footnote{\answerAfBc{We bound $|f_{\mathbf{s},y} - f_y|$ rather than UB directly ($b_{\mathbf{s},y} = 1 - f_{\mathbf{s},y}/f_y$) because the ratio form would introduce non-linearity for no practical benefit. When $|f_{\textbf{s},y} - f_y|$ is small, UB is small as well, so the two are interchangeable for practical purposes.}} We also incorporate a budget constraint that bounds the total cost of modifying the dataset, where additions and deletions may incur different costs $c_a$ and $c_d$, respectively. The cost function is piecewise linear: $c_a\cdot \Delta[\textbf{s}y]$ for additions $\Delta[\textbf{s}y] > 0$ and $c_d \cdot \Delta[\textbf{s}y]$ for deletions $\Delta[\textbf{s}y] < 0$. There are two {\em a priori} non-linear components of our ILP: the approximate fairness constraint involving a ratio of decision variables and the piecewise linear form of the cost function. We show how to linearize both in Appendix~\ref{app:ILP_linearization}.
\answer{We explore three objective functions that capture distinct practical goals.

\textbf{Minimize dataset size (\texttt{min\_size}).}  When the goal is to retain the smallest dataset that satisfies both fairness and coverage constraints, we minimize the resulting dataset size:
$\min \sum_{\mathbf{s},y} ([\mathbf{s}y] + \Delta[\mathbf{s}y])$. This objective is motivated by data minimization principles such as the GDPR's data minimization requirement (Art.~5(1)(c)), and by storage or computational efficiency.} %

\textbf{Minimize total modifications (\texttt{min\_changes}).}  When the primary concern is preserving the statistical properties of the original dataset, we minimize the total number of modifications regardless of direction: $\min \sum_{\mathbf{s},y} \bigl(\Delta^{+}_{\mathbf{s},y} + \Delta^{-}_{\mathbf{s},y}\bigr)$. Here $\Delta^{+}_{\mathbf{s},y}$ and $\Delta^{-}_{\mathbf{s},y}$ denote additions and deletions for the group-label, respectively.  This objective is motivated by data quality considerations: each modification, whether an addition or deletion, potentially distorts the original data distribution in ways beyond the intended fairness correction.
\answerEqyy{A natural question is why model deletions when the original data is available. The answer is that deletions and additions interact non-trivially: deleting overrepresented tuples shifts global proportions, lowering the target underrepresented groups must reach. As we show in Section~\ref{sec:ILP_experimental_evaluation}, for the Adult dataset with $\epsilon = 0.05$, restricting to additions only ($x = 1$) requires 4,201 additions; permitting 1,059 strategic deletions reduces this to 1,599---a 62\% reduction---with total modifications dropping from 4,201 to 2,658. Since data acquisition may be expensive or unavailable~\cite{chen2018my}, this flexibility yields substantial savings in practice. Coverage constraints (Section~\ref{sec:coverage}) formally guarantee that this flexibility cannot severely degrade data representation.}

\textbf{Minimize cost (\texttt{min\_cost}).} We minimize the total weighted cost:
$\min \sum_{\mathbf{s},y}(c_a \cdot \Delta^{+}_{\mathbf{s},y} + c_d \cdot
\Delta^{-}_{\mathbf{s},y})$. This is of practical interest because additions and deletions may incur different costs: for example, acquiring new data may be more expensive than removing existing records. This objective is the linearized cost derived in Appendix~\ref{app:ILP_linearization}.

\subsection{Experimental evaluation}\label{sec:ILP_experimental_evaluation}
The ILP formulation for bias mitigation includes three key parameters influencing the solution: the fairness tolerance $\epsilon$, the coverage thresholds $m_{\textbf{s},y}$ and the cost asymmetry $(c_a, c_d)$. We study how each parameter affects the solutions for the objectives introduced earlier. These experiments are conducted on the COMPAS, Adult, and Default datasets, which vary in size and degree of initial bias, allowing us to assess if observed patterns generalize or are dataset-specific.\footnote{The source code is available at \url{https://github.com/bscarone/data-bias-coverage}. } When all three datasets exhibit the same pattern, we present only one in the main text; complete results appear in Appendix~\ref{app:ILP_experiments}.
For the fairness tolerance experiment, we require non-empty groups (\ie $m_{\textbf{s},y}=1$ for every group-label pair) and vary the fairness tolerance $\epsilon$ from $0.01$ to the maximum difference $|f_{\textbf{s},y}-f_y|$ in the original data, the point at which no modifications are required. %
The results are shown in Figure~\ref{fig:vary_eps_compas}. 
\begin{figure}
    \centering
    \includegraphics[width=0.9\linewidth]
    {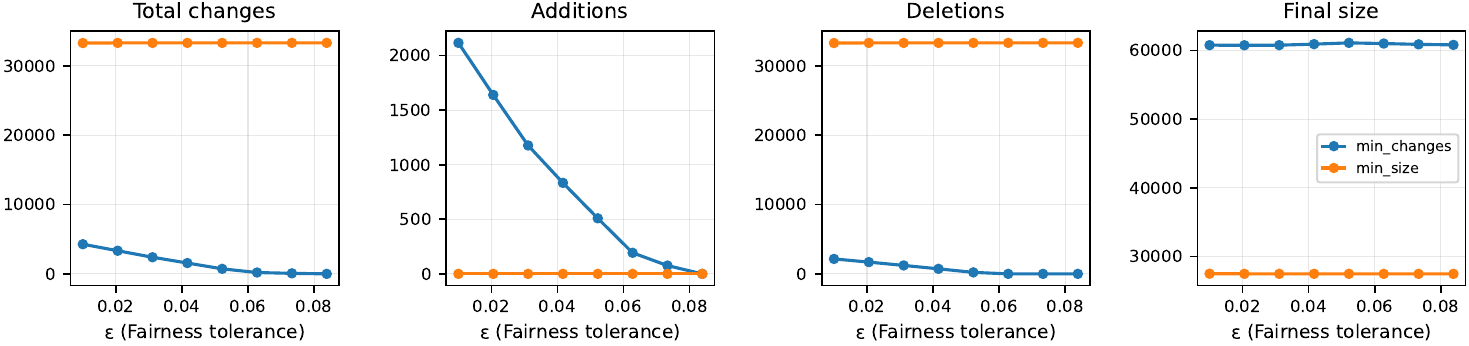}
    \caption{Effect of fairness tolerance ($\epsilon$) on mitigation solutions for the COMPAS dataset \answer{($m_{\textbf{s},y}=1$, $c_a=c_d=1$)}.}
    \Description{}
    \label{fig:vary_eps_compas}
\end{figure}
The objectives exhibit two distinct behaviors. The \texttt{min\_changes} objective produces the expected tradeoff curve: stricter fairness requirements (smaller $\epsilon$) demand more modifications, with changes decreasing monotonically as $\epsilon$ increases until reaching zero at the maximum difference of frequencies. %
This objective balances additions and deletions, using whichever combination minimizes total modifications for each group-label pair.
In contrast, %
\texttt{min\_size} pursues a deletion-only strategy, achieving zero additions across all datasets and $\epsilon$ values. 
Notably, \texttt{min\_changes} retains substantially more data than the other objectives. For instance, at $\epsilon = 0.01$, \texttt{min\_changes} preserves $48,887$ records for Adult, $60,758$ for COMPAS, and $29,991$ for Default, while \texttt{min\_size} retains only $21,788$, $27,517$, and $72$ respectively. This highlights a practical consideration: with minimal coverage constraints, objectives that do not penalize deletions aggressively discard data, potentially eliminating valuable information even when modest modifications would suffice for bias mitigation.

For the coverage experiment, we fix a fairness tolerance of $\epsilon=0.05$ and vary the coverage scale from $0$ to $1$ in increments of $0.1$. A coverage scale of $x$ sets $m_{\textbf{s},y} = \max(1, \text{round}(x \cdot [\textbf{s}y]))$ for each group-label pair, requiring retention of that fraction of the original counts. At $x=0$, this reduces to the non-empty constraint ($m_{\textbf{s},y}=1$); at $x=1$, $m_{\textbf{s},y}=[\textbf{s}y]$, thus deletion-only strategies are infeasible. The results are shown in Figure~\ref{fig:vary_coverage_adult}.
\begin{figure}
    \centering
    \includegraphics[width=0.9\linewidth]{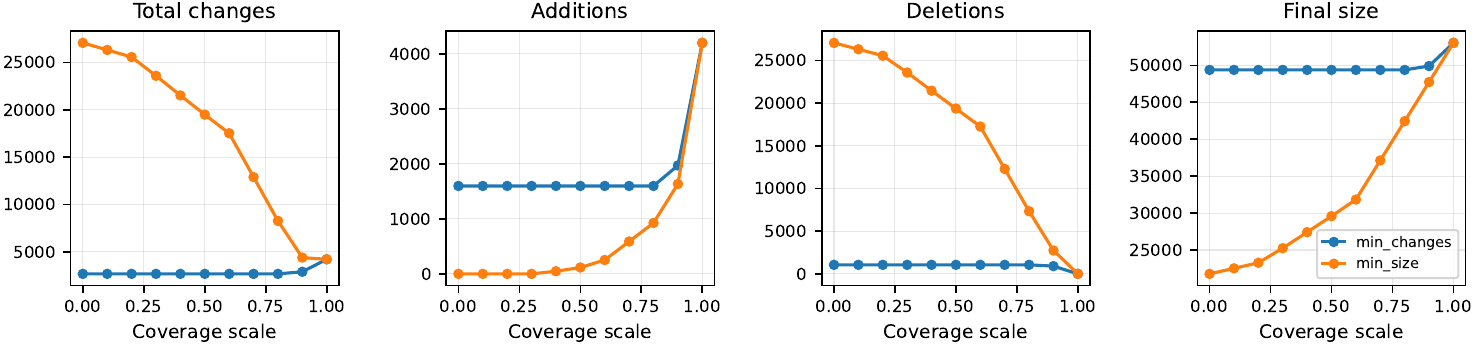}
    \caption{Effect of coverage scale on mitigation solutions for the Adult dataset \answer{($\epsilon = 0.05$, $c_a=c_d=1$)}.}
    \Description{}
    \label{fig:vary_coverage_adult}
\end{figure}
The results reveal two distinct behaviors. The \texttt{min\_changes} objective produces stable solutions: total changes remain nearly constant across coverage scales from $0$ to $0.9$, increasing only at $x=1$ when deletions are prohibited. This stability arises because \texttt{min\_changes} finds an efficient balance of additions and deletions that satisfies fairness constraints regardless of coverage requirements.
In contrast, \texttt{min\_size} is highly sensitive to coverage. At low coverage, it deletes aggressively, retaining only what coverage requires. As coverage increases, permissible deletions decrease linearly, forcing a gradual transition toward addition-based strategies. %
At $x=1$, both objectives converge to the same addition-only solution. Notably, \texttt{min\_changes} requires substantially more modifications at $x=1$ than at lower coverage scales (\eg $2{,}658$ vs.\ $4{,}201$ for Adult), demonstrating that the flexibility to delete can significantly reduce the total cost of achieving fairness.

For the cost experiment, we fix a fairness tolerance of $\epsilon=0.05$, we require non-empty groups and vary the cost ratio $c_d/c_a$ from $0.1$ to $10$, with $c_a$ fixed at $1.0$. When $c_d/c_a < 1$, deletions are cheaper than additions; when $c_d/c_a > 1$, additions are cheaper. At $c_d/c_a = 1$, minimizing cost is equivalent to \texttt{min\_changes} (with the same $B$). Results are shown in Figure~\ref{fig:vary_cost_default}. 
All three datasets exhibit a clear phase transition from deletion-heavy to addition-heavy strategies as $c_d/c_a$ increases. When deletions are cheap (low $c_d/c_a$), the optimizer removes data aggressively; when deletions are expensive (high $c_d/c_a$), it relies exclusively on additions. The transition occurs at different thresholds depending on the dataset structure: COMPAS shifts to addition-only at $c_d/c_a \geq 2$, while Adult and Default require $c_d/c_a \geq 4$. %
At intermediate ratios, the optimizer employs mixed strategies that balance additions and deletions according to their relative costs.
\begin{figure}
    \centering
    \includegraphics[width=0.9\linewidth]{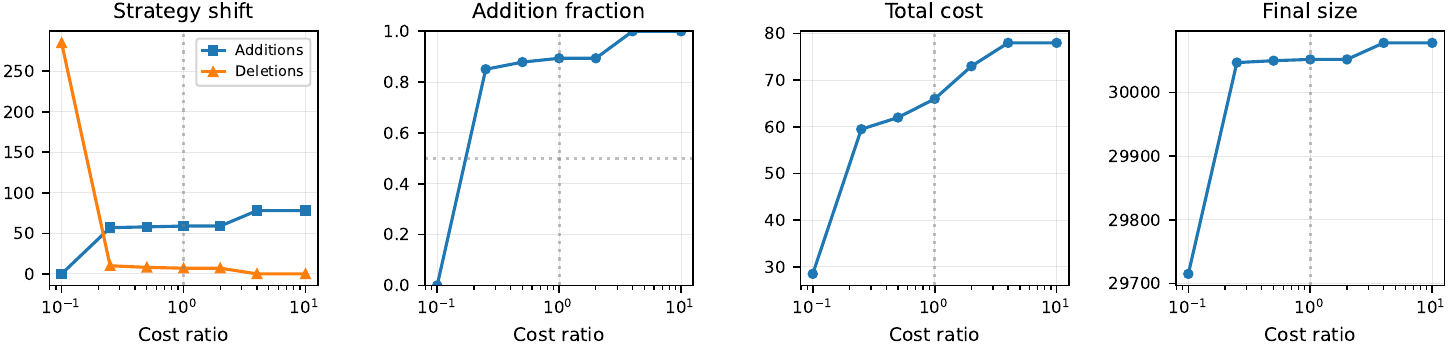}
    \caption{Effect of cost ratio on mitigation solutions for the Default dataset \answer{($\epsilon=0.05$, $m_{\textbf{s},y}=1$)}.}
    \Description{}
    \label{fig:vary_cost_default}
\end{figure}

\answer{
\textbf{Coverage constraints, objectives \& dataset structure.}
The experiments above reveal that coverage constraints and optimization objectives play complementary and distinct roles. Under \texttt{min\_changes}, the objective itself penalizes every deletion, so group sizes are implicitly preserved: coverage constraints rarely have a visible effect at scales below 1 (Figure~\ref{fig:vary_coverage_adult}). Under \texttt{min\_size}, the objective actively seeks to shrink the dataset, so without explicit coverage constraints the optimizer is free to collapse group-label counts. The Default results in Section~\ref{sec:ml_evaluation} (Figure~\ref{fig:ml_eval_vary_eps_min_size_default}) make this concrete: when fairness constraints are relaxed (higher~$\epsilon$), \texttt{min\_size} reduces the dataset to as few as 24~observations, rendering reliable predictive performance impossible. 
The severity of this effect depends on dataset structure: Default is most vulnerable because its ``Other'' education groups contain only 170--298~tuples with large initial frequency deviations ($|f_{\mathbf{s},y} - f_y| \approx 0.15$), while COMPAS is more resilient due to larger minimum group sizes (5{,}428~tuples) and smaller deviations ($|f_{\mathbf{s},y} - f_y| \leq 0.08$). The cost ratio phase transitions similarly reflect dataset structure, with COMPAS shifting to addition-only strategies at $c_d/c_a \geq 2$ versus $c_d/c_a \geq 4$ for Adult and Default. Stricter fairness constraints (smaller $\epsilon$) only prevent this type of collapse because they can \emph{inadvertently} act as implicit coverage constraints, forcing the optimizer to retain more data. This coupling is fragile: it ties data sufficiency to a fairness parameter that practitioners may wish to relax. Our explicit coverage guarantees decouple these two concerns, giving practitioners independent control over fairness tolerance and minimum representation. Coverage constraints are essential for objectives that permit aggressive deletion, providing a formal guarantee that the optimizer will maintain sufficient data representation.}
\section{ML-Based Evaluation}\label{sec:ml_evaluation}
In this section, for each experiment detailed in Section \ref{sec:price_of_fairness}, we train five ML models on the mitigated and biased versions of the datasets to evaluate how our data bias mitigation affects ML performance. 
We follow the methodology used by Scarone et al.\ \cite{scarone2025principled}, which we review next. To simulate the setting in which we have a biased dataset $T$ and wish to collect external data (\eg from an open data lake) to mitigate its biases, we partition $T$ uniformly at random into two sets: an ``initial sample'' of size $x_0$ and the remaining portion, which we treat as available external data. Since our ILP experiments evaluate multiple mitigations (one per parameter value), we select $x_0$ such that the external pool contains enough data to supply the additions required by all mitigations. The procedure for determining $x_0$ is described in Appendix~\ref{app:x0_selection}.
Before sampling, we drop identifier columns and remove rows with nulls. Before training, we standardize numeric features and one-hot encode categorical features.
We use an 80/20\% train-test split. For ML models, we use Random Forest, Gradient Boost Decision Trees (GBDT), Extra Trees, Ada Boost and Logistic Regression. We use Scikit-learn\footnote{\url{https://scikit-learn.org/stable/}, accessed 2026-01-02.} to implement our models. As before, when all three datasets exhibit the same pattern, we present only one in the main text; complete results appear in Appendix~\ref{app:ml_eval_experiments}. We report accuracy across all experiments.\footnote{While precision, recall and balanced accuracy offer interpretable error decomposition for binary classification, they require aggregation schemes %
for multi-label settings that obscure their intuitive meaning. Since our method explicitly handles non-binary label spaces, a setting underexplored in the bias mitigation literature, we prioritize metrics that remain directly comparable across binary (Adult, Default) and ternary (COMPAS) settings.}

Figure~\ref{fig:ml_eval_vary_epsilon_min_changes_compas} presents accuracy as a function of the 
fairness tolerance $\epsilon$ under the \texttt{min\_changes} objective. Our results demonstrate that satisfying fairness constraints 
via data bias mitigation does not meaningfully degrade predictive accuracy. The COMPAS dataset %
exhibits strong stability, with accuracy fluctuating  by only 1 percentage point (0.68--0.69). Notably, this stability holds regardless of the classifier used, with gradient boosting consistently achieving the highest accuracy across all datasets.

\begin{figure}[htbp]
    \centering
    \begin{subfigure}[b]{0.32\textwidth}
        \centering
        \includegraphics[width=\textwidth]{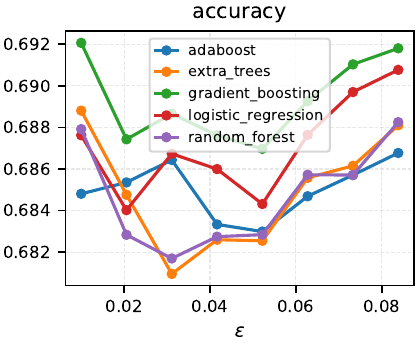}
        \caption{Fairness - \texttt{min\_changes} objective} %
        \label{fig:ml_eval_vary_epsilon_min_changes_compas}
    \end{subfigure}
    \hfill
    \begin{subfigure}[b]{0.32\textwidth}
        \centering
        \includegraphics[width=\textwidth]{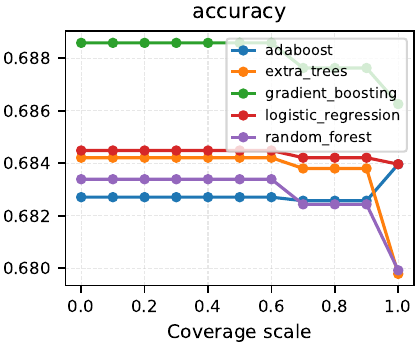}
        \caption{Coverage - \texttt{min\_changes} objective}
        \label{fig:ml_eval_vary_coverage_min_changes_compas}
    \end{subfigure}
    \hfill
    \begin{subfigure}[b]{0.32\textwidth}
        \centering
        \includegraphics[width=\textwidth]{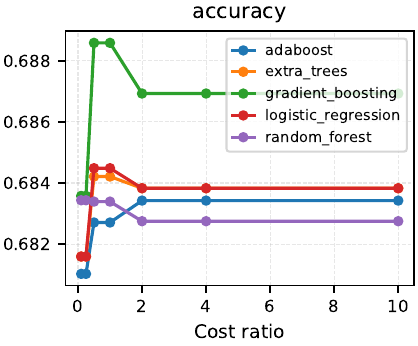}
        \caption{Cost - \texttt{min\_cost} objective}
        \label{fig:ml_eval_vary_eps_min-size_default}
    \end{subfigure}
    \caption{Experimental results for COMPAS. \answer{For (a) $m_{\textbf{s},y}=1$, (b) $\epsilon=0.05$, for (c) both. For (a) \& (b) $c_a=c_d=1$.} } %
    \Description{}
    \label{fig:ml_eval_min-changes_compas}
\end{figure}

\begin{figure}[htbp]
    \centering
    \begin{subfigure}[b]{0.32\textwidth}
        \centering
        \includegraphics[width=\textwidth]{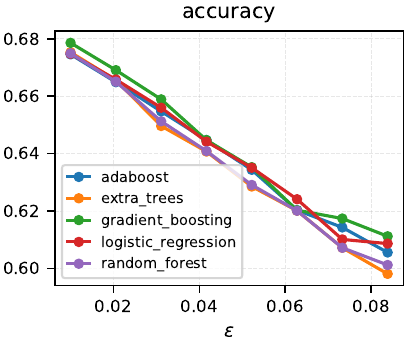}
        \caption{Fairness - \texttt{min\_size} objective}
        \label{fig:ml_eval_vary_epsilon_min_size_compas}
    \end{subfigure}
    \hfill
    \begin{subfigure}[b]{0.32\textwidth}
        \centering
        \includegraphics[width=\textwidth]{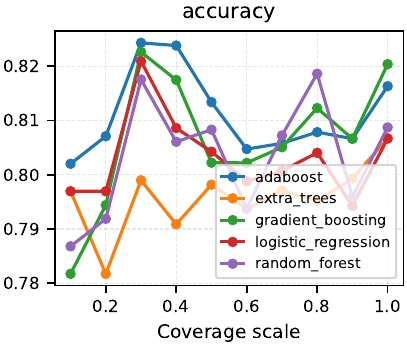}
        \caption{Coverage - \texttt{min\_size} objective}
        \label{fig:ml_eval_vary_coverage_min_size_default}
    \end{subfigure}
    \hfill
    \begin{subfigure}[b]{0.32\textwidth}
        \centering
        \includegraphics[width=\textwidth]{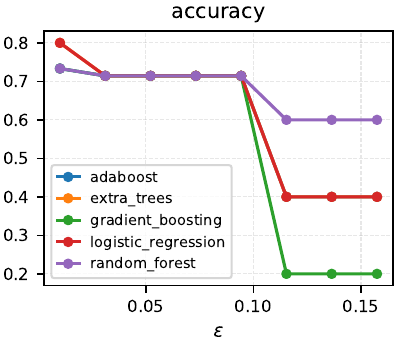}
        \caption{Fairness - \texttt{min\_size} objective}
        \label{fig:ml_eval_vary_eps_min_size_default}
    \end{subfigure}
    \caption{%
    Results for COMPAS (Left) and Default (Center and Right). \answerEqyy{For (a) \& (c) $m_{\textbf{s},y}=1$, for (b) $\epsilon=0.05$. For all $c_a=c_d=1$.}}
    \Description{}
    \label{fig:ml_eval_min-size_compas}
\end{figure}

A substantial body of work has documented trade-offs between fairness and predictive performance~\cite{corbett2023measure,menon2018cost} (a.k.a. fairness-accuracy trade-off), leading to concerns that bias mitigation may be impractical in performance-sensitive applications. Our results suggest that, at least for pre-processing approaches, satisfying fairness constraints need not incur a significant performance cost. When the optimization objective balances additions and deletions (\texttt{min\_changes}), the mitigated datasets maintain sufficient information for reliable downstream prediction.

\textbf{Optimization Objective Choice Affects Performance:} The \texttt{min\_size} objective reveals that the choice of optimization objective can have a larger impact on predictive performance than the fairness tolerance parameter itself. Figure~\ref{fig:ml_eval_vary_epsilon_min_size_compas} shows that under this objective, accuracy \emph{decreases} as $\epsilon$ \emph{increases}. For the COMPAS dataset, accuracy drops from approximately 0.68 to 0.60 across the $\epsilon$ range. The Default dataset demonstrates the most extreme case (Figure~\ref{fig:ml_eval_vary_eps_min_size_default}): at higher $\epsilon$ values, sample sizes collapse to as few as 24 observations, rendering reliable evaluation impossible and causing accuracy to drop significantly.
This behavior is a consequence of the optimization objective, not the fairness constraints. When fairness constraints are relaxed (higher $\epsilon$), the \texttt{min\_size} objective aggressively reduces the dataset to minimize storage, removing samples that, while not necessary for satisfying fairness constraints, are informative for prediction. Stricter fairness constraints inadvertently act as implicit coverage constraints, forcing the optimizer to retain a more representative sample. The \texttt{min\_size} objective would exhibit similar behavior under any constraint that permits deletion.
This finding provides empirical motivation for the coverage constraints we introduce in Section~\ref{sec:UB_coverage_algorithm}. Rather than relying on fairness constraints to implicitly preserve data representation, we propose explicit coverage guarantees $[\mathbf{s}y] + \Delta[\mathbf{s}y] \geq m_{\mathbf{s},y}$ that ensure sufficient representation of all group-label combinations regardless of the fairness tolerance. This decouples two distinct concerns (fairness and representation), allowing practitioners to specify minimum data requirements independently of their chosen $\epsilon$.

Previously, we identified that the \texttt{min\_size} objective, when unconstrained, can aggressively reduce datasets in ways that harm predictive performance. We now evaluate whether explicit coverage constraints, a central contribution of this work, can remedy this behavior while preserving the objective's utility for minimizing storage and computational costs. Figures~\ref{fig:ml_eval_vary_coverage_min_changes_compas} and \ref{fig:ml_eval_vary_coverage_min_size_default} present accuracy as a function of coverage scale for both the \texttt{min\_size} and \texttt{min\_changes} optimization objectives, respectively, across all 
three datasets. The main findings are:

\textbf{Coverage Constraints Restore Predictive Performance}: Under the \texttt{min\_size} objective, accuracy improves monotonically or remains stable across all datasets, as the degree to which the coverage constraints are satisfied increases. The Default dataset provides the most striking illustration of why coverage constraints are necessary. 
As observed with fairness tolerance, at coverage scale 0, the \texttt{min\_size} objective reduces the dataset to too few (merely 8) observations containing only a single label, rendering machine learning evaluation impossible.\footnote{This point is omitted from Figure~\ref{fig:ml_eval_vary_coverage_min_size_default} for this reason.} With even modest coverage requirements (Figure~\ref{fig:ml_eval_vary_coverage_min_size_default}, scale $\geq 0.1$), the dataset retains sufficient samples for meaningful prediction, achieving accuracy around 0.79--0.82. These results demonstrate that coverage constraints effectively decouple data retention from fairness requirements. Without coverage constraints, the \texttt{min\_size} objective pursues aggressive reduction that sacrifices predictive utility. With appropriate coverage guarantees, the same objective produces datasets that are both fair and useful for downstream learning tasks.

\textbf{The Minimizing Changes Objective is Naturally Robust.} In contrast, the \texttt{min\_changes} objective exhibits remarkable stability across coverage scales. For all three datasets, accuracy remains essentially constant from coverage scale 0 through 0.8, with only minor variation at scales 0.9--1.0. Sample sizes similarly remain fixed: 16,339 for Adult, 36,572 for COMPAS, and 9,845 for Default across most coverage values. This stability occurs because \texttt{min\_changes} inherently balances additions and deletions to minimize total modifications. Unlike \texttt{min\_size}, which has an incentive to delete as much as possible, \texttt{min\_changes} treats deletions (like additions) as costly operations to be avoided. The objective thus naturally preserves data representation without requiring explicit coverage constraints.
The cost ratio parameter $c_d/c_a$ controls the relative penalty for additions versus deletions. Low ratios favor deletion-heavy strategies; high ratios favor additions. Figure~\ref{fig:ml_eval_vary_cost_ratio} presents accuracy as a function of cost ratio across all three datasets. Accuracy remains stable across the full range of cost ratios tested (0.1 to 10.0). For COMPAS, accuracy varies by less than 1\%. This stability has practical implications. Practitioners may have operational reasons to prefer additions over deletions (\eg data augmentation is feasible) or vice versa (\eg storage constraints favor smaller datasets or new data is costly). Our results suggest that this choice can be made based on operational considerations without sacrificing predictive performance. %

\begin{figure}[htbp]
    \centering
    \begin{subfigure}[b]{0.32\textwidth}
        \centering
        \includegraphics[width=\textwidth]{figures/ml_eval/vary_cost_ratio/compas_Sex_Code_Text_race_binary_ScoreText_ml_eval_vary_cost_ratio_min_cost_accuracy_compact.pdf}
    \end{subfigure}
    \hfill
    \begin{subfigure}[b]{0.32\textwidth}
        \centering
        \includegraphics[width=\textwidth]{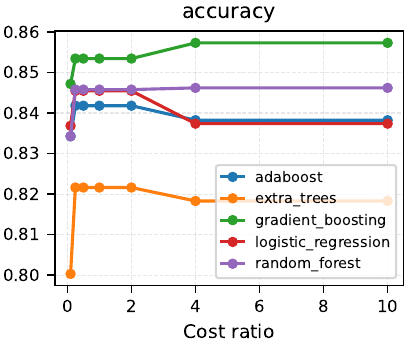}
    \end{subfigure}
    \hfill
    \begin{subfigure}[b]{0.32\textwidth}
        \centering
        \includegraphics[width=\textwidth]{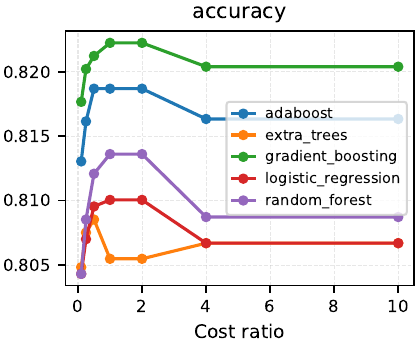}
    \end{subfigure}
    \caption{Accuracy vs.\ cost ratio %
    for the \texttt{min\_cost} objective \answer{($\epsilon=0.05$, $m_{\textbf{s},y}=1$)}. Left: COMPAS, Center: Adult, Right: Default.}
    \Description{}
    \label{fig:ml_eval_vary_cost_ratio}
\end{figure}
\section{Conclusions and future work}\label{sec:conclusions}

We designed bias mitigation algorithms that incorporate {\em coverage constraints}, enforcing sufficient representation across groups. Contrary to prior work, we permit both adding and deleting data, potentially with different costs, which is of practical interest. %
Since achieving exactly zero bias for all groups  may require large amounts of data, our closed-form algorithm trades small approximation errors in bias for less data usage. Beyond this, we formulate bias mitigation as an ILP that finds globally optimal solutions for any given fairness tolerance. %
This allows us to characterize the price of fairness, the minimum data modification cost as a function of fairness tolerance. We show that under minimal or no coverage constraints, fairness requirements can eliminate valuable information in data, which can lead to degradation of ML performance. We evaluate our mitigation algorithms on widely used real datasets and show that they preserve predictive accuracy across multiple classifiers. We show that coverage constraints are essential for this purpose.

\answer{
\textbf{Limitations.} Our framework assumes that additional data with the required group-label profile can be obtained, whether from data lakes, open datasets, or data brokers. In practice, data for underrepresented groups may be scarce, expensive, or unavailable~\cite{chen2018my}, and targeted data collection raises ethical concerns about over-surveilling specific populations~\cite{barocas2023fairness}. Our cost model abstracts these challenges through addition costs~$c_a$ and budget constraints~$B$, and upper bound constraints (Appendix~\ref{app:upper_bound_constraints}) can cap collection per group to limit over-collection. However, the framework does not account for distribution shift between external sources and the original dataset, nor does it address whether acquired data faithfully represents the target population. These are inherent challenges in data-centric fairness approaches~\cite{chen2018my,barocas2023fairness}, not specific to our method. Practitioners should assess data availability and collection feasibility before applying our framework in practice.} \answerHgb{Finally, while our framework operates on any categorical sensitive attributes, the choice of which attributes to designate as sensitive, and what fairness thresholds to impose, involves normative judgments that our technical framework does not resolve. The distinction between statistical bias (a property of data distributions) and unlawful discrimination (a legal determination) is context-dependent, and the ``price of fairness'' should be understood as the cost of achieving a \emph{chosen} fairness standard, not as a claim that discrimination has a tolerable price.}
Importantly, our framework permits multiple (non-binary) sensitive attributes and so can model bias over intersectional groups as well as coverage constraints for these groups.
In future work, we aim to formalize a definition of intersectional bias that isolates the non-additive component of bias in the general setting. %
We also intend to address bias mitigation in the presence of incomplete data, where datasets contain null values. %

\begin{acks}
We acknowledge the support of the NSF (IIS-2325632) and of the Canada Excellence Research Chairs (CERC) program. Nous remercions le programme des Chaires d'excellence en recherche du Canada (CERC) de son soutien. We thank the anonymous reviewers for their constructive and thoughtful feedback.
\end{acks}
\bibliographystyle{ACM-Reference-Format}
\bibliography{bibliography}
\section*{Generative AI Usage Statement}

The authors used Claude (Anthropic, models Sonnet 4.5 and Opus 4.5) occasionally for grammar checking of passages, formatting of tables and style editing of the manuscript. 
All content was reviewed and verified by the authors.

\appendix
\section{Dataset Details}
\label{app:datasets}

This appendix provides detailed bias analysis for each dataset used in our experimental evaluation. For each dataset, we report the group-label distributions, the absolute difference $|f_{\mathbf{s},y} - f_y|$ used as a proxy in our ILP formulation (Section~\ref{sec:price_of_fairness}), and the corresponding Uniform Bias (UB) values.

\noindent
The group-label distributions for fully specified groups (no wildcards $*$) are shown in Tables \ref{tab:adult-counts}, \ref{tab:compas-counts} and \ref{tab:default-counts} for the Adult, COMPAS and Default datasets, respectively. Table~\ref{tab:compas_ini_values} includes the aggregate groups for the COMPAS dataset.

\noindent
In the bias analysis tables (Tables~\ref{tab:adult-bias},~\ref{tab:compas-bias} and~\ref{tab:default-bias}), $f_{\mathbf{s},y}$ denotes the proportion of label $y$ within group $\mathbf{s}$, $f_y$ denotes the overall proportion of label $y$ in the dataset, along with their absolute difference and the Uniform Bias (UB) for that group-label pair. We report biases for aggregate groups (including wildcards $*$) and fully specified groups. Within each table, groups attribute values are listed in lexicographic order, except for education levels which are ordered by decreasing education level (Graduate School, University, High School, Other).

\begin{table}[ht]
\centering
\begin{tabular}{llrrr}
\toprule
\textbf{Sex} & \textbf{Race} & \textbf{$\leq$50K} & \textbf{$>$50K} & \textbf{Total} \\
\midrule
Female & Non-White &  2,938 &   227 &  3,165 \\
Female & White     & 11,485 & 1,542 & 13,027 \\
Male   & Non-White &  3,062 &   853 &  3,915 \\
Male   & White     & 19,670 & 9,065 & 28,735 \\
\midrule
\multicolumn{2}{l}{\textbf{Total}} & 37,155 & 11,687 & 48,842 \\
\bottomrule
\end{tabular}
\caption{Group-label counts for the Adult dataset.}
\label{tab:adult-counts}
\end{table} %

\begin{table}[ht]
\centering
\begin{tabular}{llrrrr}
\toprule
\textbf{Sex} & \textbf{Race} & \textbf{Low} & \textbf{Medium} & \textbf{High} & \textbf{Total} \\
\midrule
Female & Caucasian     &  4,159 &   894 &   375 &  5,428 \\
Female & Non-Caucasian &  5,637 & 1,589 &   665 &  7,891 \\
Male   & Caucasian     & 12,202 & 2,862 & 1,273 & 16,337 \\
Male   & Non-Caucasian & 19,489 & 7,143 & 4,510 & 31,142 \\
\midrule
\multicolumn{2}{l}{\textbf{Total}} & 41,487 & 12,488 & 6,823 & 60,798 \\
\bottomrule
\end{tabular}
\caption{Group-label counts for the COMPAS dataset.}
\label{tab:compas-counts}
\end{table} %

\begin{table}[t]
    \centering
    \begin{tabular}{|c|c|c|c|c|}
        \hline
        {\textbf{Sex / Race}}  & {\textbf{Label}} & \textbf{Caucasian (c)} & \textbf{Non-Caucasian (o)} & \textbf{Total}\\
        \hline
        {} & \textbf{L} & 4159 & 5637 & 9796 \\
        {Female (\textbf{w})} & M & 894 & 1589 & 2483 \\
        {} & \textbf{H} & 375 & 665 & 1040 \\
        {} & \textbf{Tot.} & \textbf{wc}: 5428 & \textbf{wo}: 7891 & \textbf{w}: 13319 \\
        \hline
        {} & \textbf{L} & 12202 & 19489 & 31691 \\
        {Male (\textbf{m})} & \textbf{M} & 2862 & 7143 & 10005 \\
        {} & \textbf{H} & 1273 & 4510 & 5783 \\
        {} & \textbf{Tot.} & \textbf{mc}: 16337 & \textbf{mo}: 31142 & \textbf{m}: 47479\\
        \hline
        {} & \textbf{L} & 16361 & 25126 & 41487 \\
        {\textbf{Total}} & \textbf{M} & 3756 & 8732 & 12488 \\
        {} & \textbf{H} & 1648 & 5175 & 6823 \\
        {} & \textbf{Tot.} & \textbf{c}: 21765 & \textbf{o}: 39033 & \textbf{n}: 60798 \\
        \hline
    \end{tabular}
    \caption{Summary statistics (values) for the initial version of the COMPAS dataset with ternary label.}
    \label{tab:compas_ini_values}
\end{table}

\begin{table}[ht]
\centering
\begin{tabular}{llrrr}
\toprule
\textbf{Sex} & \textbf{Education} & \textbf{No Default} & \textbf{Default} & \textbf{Total} \\
\midrule
Female & Graduate School & 5,101 & 1,130 & 6,231 \\
Female & University      & 6,734 & 1,922 & 8,656 \\
Female & High School     & 2,235 &   692 & 2,927 \\
Female & Other           &   279 &    19 &   298 \\
Male   & Graduate School & 3,448 &   906 & 4,354 \\
Male   & University      & 3,966 & 1,408 & 5,374 \\
Male   & High School     & 1,445 &   545 & 1,990 \\
Male   & Other           &   156 &    14 &   170 \\
\midrule
\multicolumn{2}{l}{\textbf{Total}} & 23,364 & 6,636 & 30,000 \\
\bottomrule
\end{tabular}
\caption{Group-label counts for the Default dataset.}
\label{tab:default-counts}
\end{table} %
\begin{table}[ht]
\centering
\begin{tabular}{llrrrr}
\toprule
\textbf{Group} & \textbf{Label} & $f_{\mathbf{s},y}$ & $f_y$ & $|f_{\mathbf{s},y} - f_y|$ & \textbf{Uniform Bias} \\
\midrule
\multicolumn{6}{l}{\textit{Aggregate groups (sex)}} \\
Female, $*$    & $\leq$50K & 0.891 & 0.761 & 0.130 & $-$0.171 \\
Female, $*$    & $>$50K    & 0.109 & 0.239 & 0.130 &  0.543 \\
Male, $*$      & $\leq$50K & 0.696 & 0.761 & 0.064 &  0.085 \\
Male, $*$      & $>$50K    & 0.304 & 0.239 & 0.064 & $-$0.269 \\
\midrule
\multicolumn{6}{l}{\textit{Aggregate groups (race)}} \\
$*$, Non-White & $\leq$50K & 0.847 & 0.761 & 0.087 & $-$0.114 \\
$*$, Non-White & $>$50K    & 0.153 & 0.239 & 0.087 &  0.362 \\
$*$, White     & $\leq$50K & 0.746 & 0.761 & 0.015 &  0.019 \\
$*$, White     & $>$50K    & 0.254 & 0.239 & 0.015 & $-$0.061 \\
\midrule
\multicolumn{6}{l}{\textit{Fully specified groups}} \\
Female, Non-White & $\leq$50K & 0.928 & 0.761 & 0.168 & $-$0.220 \\
Female, Non-White & $>$50K    & 0.072 & 0.239 & 0.168 &  0.700 \\
Female, White     & $\leq$50K & 0.882 & 0.761 & 0.121 & $-$0.159 \\
Female, White     & $>$50K    & 0.118 & 0.239 & 0.121 &  0.505 \\
Male, Non-White & $\leq$50K & 0.782 & 0.761 & 0.021 & $-$0.028 \\
Male, Non-White & $>$50K    & 0.218 & 0.239 & 0.021 &  0.089 \\
Male, White     & $\leq$50K & 0.685 & 0.761 & 0.076 &  0.100 \\
Male, White     & $>$50K    & 0.315 & 0.239 & 0.076 & $-$0.318 \\
\bottomrule
\end{tabular}
\caption{Bias analysis for the Adult dataset.}
\label{tab:adult-bias}
\end{table}

\begin{table}[ht]
\centering
\begin{tabular}{llrrrr}
\toprule
\textbf{Group} & \textbf{Label} & $f_{\mathbf{s},y}$ & $f_y$ & $|f_{\mathbf{s},y} - f_y|$ & \textbf{Uniform Bias} \\
\midrule
\multicolumn{6}{l}{\textit{Aggregate groups (sex)}} \\
Female, $*$ & Low    & 0.735 & 0.682 & 0.053 & $-$0.078 \\
Female, $*$ & Medium & 0.186 & 0.205 & 0.019 &  0.092 \\
Female, $*$ & High   & 0.078 & 0.112 & 0.034 &  0.304 \\
Male, $*$   & Low    & 0.667 & 0.682 & 0.015 &  0.022 \\
Male, $*$   & Medium & 0.211 & 0.205 & 0.005 & $-$0.026 \\
Male, $*$   & High   & 0.122 & 0.112 & 0.010 & $-$0.085 \\
\midrule
\multicolumn{6}{l}{\textit{Aggregate groups (race)}} \\
$*$, Caucasian     & Low    & 0.752 & 0.682 & 0.069 & $-$0.102 \\
$*$, Caucasian     & Medium & 0.173 & 0.205 & 0.033 &  0.160 \\
$*$, Caucasian     & High   & 0.076 & 0.112 & 0.037 &  0.325 \\
$*$, Non-Caucasian & Low    & 0.644 & 0.682 & 0.039 &  0.057 \\
$*$, Non-Caucasian & Medium & 0.224 & 0.205 & 0.018 & $-$0.089 \\
$*$, Non-Caucasian & High   & 0.133 & 0.112 & 0.020 & $-$0.181 \\
\midrule
\multicolumn{6}{l}{\textit{Fully specified groups}} \\
Female, Caucasian     & Low    & 0.766 & 0.682 & 0.084 & $-$0.123 \\
Female, Caucasian     & Medium & 0.165 & 0.205 & 0.041 &  0.198 \\
Female, Caucasian     & High   & 0.069 & 0.112 & 0.043 &  0.384 \\
Female, Non-Caucasian & Low    & 0.714 & 0.682 & 0.032 & $-$0.047 \\
Female, Non-Caucasian & Medium & 0.201 & 0.205 & 0.004 &  0.020 \\
Female, Non-Caucasian & High   & 0.084 & 0.112 & 0.028 &  0.249 \\
Male, Caucasian     & Low    & 0.747 & 0.682 & 0.065 & $-$0.095 \\
Male, Caucasian     & Medium & 0.175 & 0.205 & 0.030 &  0.147 \\
Male, Caucasian     & High   & 0.078 & 0.112 & 0.034 &  0.306 \\
Male, Non-Caucasian & Low    & 0.626 & 0.682 & 0.057 &  0.083 \\
Male, Non-Caucasian & Medium & 0.229 & 0.205 & 0.024 & $-$0.117 \\
Male, Non-Caucasian & High   & 0.145 & 0.112 & 0.033 & $-$0.290 \\
\bottomrule
\end{tabular}
\caption{Bias analysis for the COMPAS dataset.}
\label{tab:compas-bias}
\end{table}

\begin{table}[ht]
\centering
\begin{tabular}{llrrrr}
\toprule
\textbf{Group} & \textbf{Label} & $f_{\mathbf{s},y}$ & $f_y$ & $|f_{\mathbf{s},y} - f_y|$ & \textbf{Uniform Bias} \\
\midrule
\multicolumn{6}{l}{\textit{Aggregate groups (sex)}} \\
Female, $*$ & No Default & 0.792 & 0.779 & 0.013 & $-$0.017 \\
Female, $*$ & Default    & 0.208 & 0.221 & 0.013 &  0.061 \\
Male, $*$   & No Default & 0.758 & 0.779 & 0.020 &  0.026 \\
Male, $*$   & Default    & 0.242 & 0.221 & 0.020 & $-$0.093 \\
\midrule
\multicolumn{6}{l}{\textit{Aggregate groups (education)}} \\
$*$, Graduate School & No Default & 0.808 & 0.779 & 0.029 & $-$0.037 \\
$*$, Graduate School & Default    & 0.192 & 0.221 & 0.029 &  0.130 \\
$*$, University      & No Default & 0.763 & 0.779 & 0.016 &  0.021 \\
$*$, University      & Default    & 0.237 & 0.221 & 0.016 & $-$0.073 \\
$*$, High School     & No Default & 0.748 & 0.779 & 0.030 &  0.039 \\
$*$, High School     & Default    & 0.252 & 0.221 & 0.030 & $-$0.137 \\
$*$, Other           & No Default & 0.929 & 0.779 & 0.151 & $-$0.193 \\
$*$, Other           & Default    & 0.071 & 0.221 & 0.151 &  0.681 \\
\midrule
\multicolumn{6}{l}{\textit{Fully specified groups}} \\
Female, Graduate School & No Default & 0.819 & 0.779 & 0.040 & $-$0.051 \\
Female, Graduate School & Default    & 0.181 & 0.221 & 0.040 &  0.180 \\
Female, University      & No Default & 0.778 & 0.779 & 0.001 &  0.001 \\
Female, University      & Default    & 0.222 & 0.221 & 0.001 & $-$0.004 \\
Female, High School     & No Default & 0.764 & 0.779 & 0.015 &  0.020 \\
Female, High School     & Default    & 0.236 & 0.221 & 0.015 & $-$0.069 \\
Female, Other           & No Default & 0.936 & 0.779 & 0.157 & $-$0.202 \\
Female, Other           & Default    & 0.064 & 0.221 & 0.157 &  0.712 \\
Male, Graduate School & No Default & 0.792 & 0.779 & 0.013 & $-$0.017 \\
Male, Graduate School & Default    & 0.208 & 0.221 & 0.013 &  0.059 \\
Male, University      & No Default & 0.738 & 0.779 & 0.041 &  0.052 \\
Male, University      & Default    & 0.262 & 0.221 & 0.041 & $-$0.184 \\
Male, High School     & No Default & 0.726 & 0.779 & 0.053 &  0.068 \\
Male, High School     & Default    & 0.274 & 0.221 & 0.053 & $-$0.238 \\
Male, Other           & No Default & 0.918 & 0.779 & 0.139 & $-$0.178 \\
Male, Other           & Default    & 0.082 & 0.221 & 0.139 &  0.628 \\
\bottomrule
\end{tabular}
\caption{Bias analysis for the Default dataset.}
\label{tab:default-bias}
\end{table}

\section{Coverage-Constrained Mitigation}\label{app:mitigation_coverage}

This appendix provides supplementary material for Section~\ref{sec:UB_coverage_algorithm}. We present detailed experimental results demonstrating our coverage-constrained mitigation algorithm on the COMPAS dataset, along with additional technical details omitted from the main text for brevity.

\subsection{Relaxing fairness conditions}\label{app:nonzero_bias_condition} %

The unbiased condition can be generalized~\cite{scarone2025principled} to allow a data scientist to specify a number $K_{\textbf{s},y} > 0$ for each group \textbf{s} and label value $y$, which is the desired ratio of the success rate of the protected group to the success rate of the population:
\[
    f_{\textbf{s},y} / f_y = K_{\textbf{s},y}
\]
To recover the setting used in Section~\ref{sec:UB_coverage_algorithm}, where zero bias is the goal, one can set $K_{\textbf{s},y}=1$. 

\noindent
With this general unbiased condition, 
the new objective becomes
\[
    f^{\text{new}}_{\textbf{s},y} = \frac{[\textbf{s}y]+\Delta[\textbf{s}y]}{\textbf{s}+\Delta\textbf{s}}=f_y\cdot K_{\textbf{s},y} \Leftrightarrow \frac{[\textbf{s}y]+\Delta[\textbf{s}y]}{f_y\cdot K_{\textbf{s},y}} = \textbf{s}+\Delta\textbf{s}
\]
As was done by Scarone et al. \cite{scarone2025principled}, we can equate the equations corresponding to $[\textbf{s}y]$ and $[\textbf{s}y_i]$, obtaining 
\[
    \frac{[\textbf{s}y]+\Delta[\textbf{s}y]}{f_y\cdot K_{\textbf{s},y}} = \frac{[\textbf{s}y_i]+\Delta[\textbf{s}y_i]}{f_{y_i}\cdot K_{\textbf{s},y_i}}
\]
for each group-label. Solving for $\Delta[\textbf{s}y]$ we get the general version of Equation~\ref{eq:UB_mitigation_linear_system}:%
\[
    \Delta[\textbf{s}y] = -[\textbf{s}y] + \frac{y}{y_i}\frac{K_{\textbf{s},y}}{K_{\textbf{s},y_i}}\Big([\textbf{s}y_i] + \Delta[\textbf{s}y_i]\Big) 
\]

\subsection{Full solutions for COMPAS dataset}
\label{app:compas_coverage}

We apply the closed-form mitigation algorithm to the COMPAS dataset with two sensitive attributes (Sex and Race) and three risk score labels (Low, Medium, High). 

\subsubsection{Exact Solution}

Table~\ref{tab:compas_closed_form_exact} shows the exact solution with $m_{\textbf{s},y} = 1$, which uses $\Delta[\textbf{s}y] = -[\textbf{s}y] + k \cdot y$ where $k = \lceil m_{\textbf{s},y} / y \rceil$. This achieves exactly zero bias for all group-label pairs. Note that the exact solution without  coverage constraints %
results in $k = 1$ for all labels, yielding $[\textbf{s}y]^{\text{new}} = [\textbf{s}y] + \Delta[\textbf{s}y]= y$. This is also the smallest valid solution: since $k^* \geq \lceil m_{\textbf{s},y}/y \rceil = 1$ for $m_{\textbf{s},y} = 1$, any $k < 1$ yields an invalid solution. For instance, $k=0$ gives $[\textbf{s}y]^{\text{new}} = 0$, violating the coverage constraint, and $k=-1$ gives $|\Delta[\textbf{s}y]| = |-[\textbf{s}y]-y| > [\textbf{s}y]$, requiring more deletions than available tuples from $(\textbf{s},y)$ in the dataset. Thus, $k=1$ represents the minimum modification required to achieve zero bias under the minimal coverage constraint $m_{\textbf{s},y} = 1$.

\begin{table}
\begin{tabular}{llllllll}
\toprule
Group $\textbf{s}$ & Label $y$ & $[\textbf{s}y]$ & $\Delta[sy]$ & $[\textbf{s}y]^{\text{new}}$ & $f_{\textbf{s},y}^{\text{new}}$ & $f_y$ & Uniform Bias \\
\midrule
(Female, Caucasian) & Low & 4159 & 37328 & 41487 & 0.6824 & 0.6824 & 0.0000 \\
(Female, Caucasian) & Medium & 894 & 11594 & 12488 & 0.2054 & 0.2054 & 0.0000 \\
(Female, Caucasian) & High & 375 & 6448 & 6823 & 0.1122 & 0.1122 & 0.0000 \\
(Female, Non-Caucasian) & Low & 5637 & 35850 & 41487 & 0.6824 & 0.6824 & 0.0000 \\
(Female, Non-Caucasian) & Medium & 1589 & 10899 & 12488 & 0.2054 & 0.2054 & 0.0000 \\
(Female, Non-Caucasian) & High & 665 & 6158 & 6823 & 0.1122 & 0.1122 & 0.0000 \\
(Male, Caucasian) & Low & 12202 & 29285 & 41487 & 0.6824 & 0.6824 & 0.0000 \\
(Male, Caucasian) & Medium & 2862 & 9626 & 12488 & 0.2054 & 0.2054 & 0.0000 \\
(Male, Caucasian) & High & 1273 & 5550 & 6823 & 0.1122 & 0.1122 & 0.0000 \\
(Male, Non-Caucasian) & Low & 19489 & 21998 & 41487 & 0.6824 & 0.6824 & 0.0000 \\
(Male, Non-Caucasian) & Medium & 7143 & 5345 & 12488 & 0.2054 & 0.2054 & 0.0000 \\
(Male, Non-Caucasian) & High & 4510 & 2313 & 6823 & 0.1122 & 0.1122 & 0.0000 \\
\bottomrule
\end{tabular}
\caption{Closed-form exact solution for COMPAS}
\label{tab:compas_closed_form_exact}
\end{table} %

Table~\ref{tab:compas_closed_form_approximate} shows the results using the approximate solution with coverage constraints $m_{\textbf{s},y} = 1{,}000$ and reference label selection $i = \arg\max_j [\textbf{s}y_j]/[y_j]$ (which determines the free variable for each group $\textbf{s}$).

The algorithm achieves near-zero bias across all group-label pairs, with an average residual bias of $0.00014$. The small non-zero values arise from the ceiling operation in the computation of $\Delta[\textbf{s}y_i]$ (which ensures the coverage constraint is satisfied), the floor operation when upper bound constraints are present (Section~\ref{app:upper_bound_constraints}), and the rounding operation in the computation of $\Delta[\textbf{s}y]$.

\begin{table}
\begin{tabular}{llllllll}
\toprule
Group $\textbf{s}$ & Label $y$ & $[\textbf{s}y]$ & $\Delta[\textbf{s}y]$ & $[\textbf{s}y]^{\text{new}}$ & $f_{\textbf{s},y}^{\text{new}}$ & $f_y$ & Uniform Bias \\
\midrule
(Female, Caucasian) & Low & 4159 & 1922 & 6081 & 0.6823 & 0.6823 & 0.0000 \\
(Female, Caucasian) & High & 375 & 626 & 1001 & 0.1123 & 0.1123 & -0.0002 \\
(Female, Caucasian) & Medium & 894 & 937 & 1831 & 0.2054 & 0.2054 & 0.0000 \\
(Female, Non-Caucasian) & Low & 5637 & 444 & 6081 & 0.6823 & 0.6823 & 0.0000 \\
(Female, Non-Caucasian) & High & 665 & 336 & 1001 & 0.1123 & 0.1123 & -0.0002 \\
(Female, Non-Caucasian) & Medium & 1589 & 242 & 1831 & 0.2054 & 0.2054 & 0.0000 \\
(Male, Caucasian) & Low & 12202 & -6121 & 6081 & 0.6823 & 0.6823 & 0.0000 \\
(Male, Caucasian) & High & 1273 & -272 & 1001 & 0.1123 & 0.1123 & -0.0002 \\
(Male, Caucasian) & Medium & 2862 & -1031 & 1831 & 0.2054 & 0.2054 & 0.0000 \\
(Male, Non-Caucasian) & Low & 19489 & -13408 & 6081 & 0.6823 & 0.6823 & -0.0001 \\
(Male, Non-Caucasian) & High & 4510 & -3510 & 1000 & 0.1122 & 0.1123 & 0.0007 \\
(Male, Non-Caucasian) & Medium & 7143 & -5312 & 1831 & 0.2055 & 0.2054 & -0.0001 \\
\bottomrule
\end{tabular}
\caption{Closed-form approximate solution for COMPAS}
\label{tab:compas_closed_form_approximate}
\end{table} %

Note that $f_{\textbf{s},y}^{\text{new}} \approx f_y$ for all rows, confirming that the modified data approximately satisfies the unbiased condition $f_{\textbf{s},y} = f_y$, with small deviations due to rounding.

\subsection{Error bounds for approximate solution}

Theorem~\ref{thm:approx_errors_bounds}, used to establish the error bounds for the approximate bias mitigation algorithm, is a consequence of Theorem~\ref{thm:approx_errors_bounds_general_deltas}.

\begin{theorem}[Error bounds for approximate bias mitigation - General Form]\label{thm:approx_errors_bounds_general_deltas}
    Fix $k \in \mathbb{Z}^+$. Let $a, b \in \mathbb{Z}^+$ and $c, d \in \mathbb{Z}$ with $a \leq b$, $c \leq d$, $a + c \geq 0$, $b + d > 0$, and $a + c + 1 \leq b + d + k$. Define $E = \frac{a+c+1}{b+d+k} - \frac{a+c}{b+d}$. Then: $\frac{1-k}{b+d+k} \;\leq\; E \;\leq\; \frac{1}{b+d+k}$. Both bounds are tight.
\end{theorem}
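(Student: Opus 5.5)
The plan is to reduce everything to two aggregated integers and then compare a single rational expression with each bound. Set $p = a+c$ and $q = b+d$. The hypotheses give $p \geq 0$ and $q > 0$. From $a \le b$ and $c \le d$ we also get $p \le q$. This inequality $0 \le p \le q$ is the only structural fact the argument will need. The condition $a+c+1 \le b+d+k$ says $p+1 \le q+k$, i.e.\ the perturbed numerator does not exceed the perturbed denominator. It is used only to make sure the tightness examples are admissible.

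First I put $E$ over a common denominator:
\[
E \;=\; \frac{p+1}{q+k} - \frac{p}{q} \;=\; \frac{q(p+1) - p(q+k)}{q(q+k)} \;=\; \frac{q - pk}{q(q+k)}.
\]
Since $q>0$ and $k \ge 1$, the denominator $q(q+k)$ is positive. Each claimed bound can therefore be checked by clearing denominators.

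For the upper bound, $E \le \frac{1}{q+k}$ is equivalent to $q - pk \le q$, i.e.\ $pk \ge 0$. This holds because $p \ge 0$ and $k>0$.

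For the lower bound, $E \ge \frac{1-k}{q+k}$ is equivalent to $q - pk \ge q(1-k) = q - qk$, i.e.\ $k(q-p) \ge 0$. This holds because $p \le q$. I expect this to be the only step with any content: it is exactly where the hypotheses $a\le b$ and $c \le d$ enter, through $p \le q$.

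For tightness I exhibit admissible parameters attaining each bound with equality.
\begin{itemize}
\item For the upper bound, equality requires $pk=0$, so $p=0$. Take, for instance, $a=b=1$, $c=-1$, $d=0$. This gives $p=0$, $q=1$, and $p+1 = 1 \le 1+k$, so all hypotheses hold and $E = \frac{1}{1+k}$.
\item For the lower bound, equality requires $p=q$. Take $a=b$ and $c=d$ with $a+c>0$, e.g.\ $a=b=1$, $c=d=0$. Then $p+1 = q+1 \le q+k$ since $k\ge1$, and $E = \frac{1-k}{q+k}$.
\end{itemize}
Both bounds are attained for every fixed $k$, which proves tightness.

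Finally, Theorem~\ref{thm:approx_errors_bounds} follows by specializing $a=[\textbf{s}y]$, $b=\textbf{s}$, $c=\Delta^{\text{ex}}_{\textbf{s},y}$, $d=\Delta^{\text{ex}}_{\textbf{s}}$. In the exact solution the group-label count never exceeds its group total, so the hypotheses are satisfied and the two bounds coincide with those stated there.
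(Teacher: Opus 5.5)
Your proof is correct and matches the paper's argument: the paper also sets $n=a+c$, $m=b+d$, rewrites $E=\frac{m-nk}{m(m+k)}$, gets the upper bound from $n\ge 0$ and the lower bound from $n\le m$, and shows tightness at $n=0$ and at $n=m$. The only difference is the choice of witnesses: the paper uses $(a,b,c,d)=(1,1,-1,1)$ and $(1,1,1,1)$, while your $(1,1,-1,0)$ and $(1,1,0,0)$ work equally well.
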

\begin{proof}
    Let $n = a + c$ and $m = b + d$. The constraints $a \leq b$ and $c \leq d$ imply $n \leq m$. The constraint $a + c \geq 0$ gives $n \geq 0$.
    Before computing the bounds, we algebraically simplify $E$,
    \[
        E = \frac{n+1}{m+k} - \frac{n}{m} = \frac{(n+1)m - n(m+k)}{m(m+k)} = \frac{m - nk}{m(m+k)}
    \]

    Now, for the upper bound, since $n \geq 0$:
    \[
        m - nk \leq m
    \]
    with equality if and only if $n = 0$. Dividing by $m(m+k) > 0$, we obtain the upper bound:
    \[
        \frac{m - nk}{m(m+k)} \leq \frac{1}{m+k}
    \]

    For the lower bound, since $n \leq m$:
    \[
        m - nk \geq m(1-k)
    \]
    with equality if and only if $n = m$. Again, dividing by $m(m+k) > 0$, we obtain the lower bound:
    \[
        \frac{m - nk}{m(m+k)} \geq \frac{1-k}{m+k}
    \]

    Regarding tightness, note that
    \begin{itemize}
        \item The upper bound is achieved when $n = 0$, \eg $(a,b,c,d) = (1,1,-1,1)$.
        \item The lower bound is achieved when $n = m$, \eg $(a,b,c,d) = (1,1,1,1)$.
    \end{itemize}
\end{proof}
\begin{remark}
    Theorem~\ref{thm:approx_errors_bounds} is recovered by setting $a = [\textbf{s}y]$, $b = \textbf{s}$, $c = \Delta^{\text{ex}}_{\textbf{s},y}$, and $d = \Delta^{\text{ex}}_{\textbf{s}}$.
\end{remark}

\subsection{Upper bound constraints}\label{app:upper_bound_constraints}

Following the same approach as for lower bound constraints (Section~\ref{sec:coverage}) — for both exact and approximate solutions — the framework can also accommodate upper bound constraints of the form $[\textbf{s}y]+\Delta [\textbf{s}y] \leq M_{\textbf{s},y}$. For exact solutions, an upper bound constraint $[\textbf{s}y]+\Delta[\textbf{s}y] \leq M_{\textbf{s},y}$ translates to $k \leq \lfloor M_{\textbf{s},y}/y \rfloor$. Combining both bounds, the valid range of $k$ is $\lceil m_{\textbf{s},y}/y \rceil \leq k \leq \lfloor M_{\textbf{s},y}/y \rfloor$, and the most data-efficient exact solution satisfying both constraints is $k^* = \lceil m_{\textbf{s},y}/y \rceil$.

For approximate solutions, an upper bound constraint $[\textbf{s}y]+\Delta[\textbf{s}y] \leq M_{\textbf{s},y}$ translates to $\Delta[\textbf{s}y_i] \leq \frac{y_i}{y}M_{\textbf{s},y}-[\textbf{s}y_i]$ for each label $y$. Since we need to satisfy all labels simultaneously, the largest integral value of $\Delta[\textbf{s}y_i]$ that guarantees the upper bound constraint is:
\[
    \Delta[\textbf{s}y_i]\leq \min_{y}\Big\{\Big\lfloor\frac{y_i}{y}M_{\textbf{s},y}-[\textbf{s}y_i]\Big\rfloor\Big\}
\]
Combining both bounds (the lower bound from Section~\ref{sec:approx} and the upper bound above), we get all possible values of each free variable that produce valid solutions:
\[ 
    \max_{y}\Big\{\Big\lceil \frac{y_i}{y}m_{\textbf{s},y}-[\textbf{s}y_i]\Big\rceil\Big\}\leq\Delta[\textbf{s}y_i]\leq \min_{y}\Big\{\Big\lfloor\frac{y_i}{y}M_{\textbf{s},y}-[\textbf{s}y_i]\Big\rfloor\Big\}
\]

\section{External Source Distribution Estimation}\label{app:coverage_LB_estimation}

This appendix provides supplementary material for Section~\ref{sec:coverage_source_estimation}.

\begin{theorem}[Serfling's inequality]\label{thm:serfling}
    Let $Y_1, \dots, Y_n$ be random variables obtained by sampling without replacement from a finite population of $N$ values, each bounded in $[a, b]$. Let $X = \sum_{j=1}^n Y_j$ and let $\mu = E[X]$. Then, for any $t > 0$,
    \[
        \Pr[|X - \mu| \geq t] \leq 2\exp\left(-\frac{2t^2}{n(b-a)^2\left(1 - \frac{n-1}{N}\right)}\right).
    \]
\end{theorem}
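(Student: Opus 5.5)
The plan is Serfling's original martingale argument, with Hoeffding's lemma applied conditionally. Let the finite population be $x_1,\dots,x_N\in[a,b]$ with mean $\bar{x}$, so that $\mu=E[X]=n\bar{x}$. The case $n=N$ is trivial: then $X=\sum_i x_i=\mu$ deterministically, so the probability is $0$. From here on I assume $1\le n\le N-1$.

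\textbf{Step 1: a martingale.} Write $S_k=\sum_{j\le k}(Y_j-\bar{x})$ and define
\[
M_k=\frac{S_k}{N-k}, \qquad k=0,\dots,n .
\]
Given $Y_1,\dots,Y_{k-1}$, the next draw $Y_k$ is uniform over the remaining $N-k+1$ values. Their centered sum is $-S_{k-1}$, so $E[Y_k-\bar{x}\mid\mathcal{F}_{k-1}]=-S_{k-1}/(N-k+1)$. A direct computation then gives $E[M_k\mid\mathcal{F}_{k-1}]=M_{k-1}$, so $(M_k)$ is a martingale with $M_0=0$.

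\textbf{Step 2: bounded increments.} The increment is
\[
M_k-M_{k-1}=\frac{Y_k-\bar{x}}{N-k}+S_{k-1}\Big(\frac{1}{N-k}-\frac{1}{N-k+1}\Big).
\]
The second term is $\mathcal{F}_{k-1}$-measurable. Hence, conditionally on the past, the increment lies in an interval of length $(b-a)/(N-k)$. Applying Hoeffding's lemma conditionally and iterating the tower property gives
\[
E\big[e^{\lambda M_n}\big]\le \exp\Big(\frac{\lambda^2(b-a)^2}{8}\sum_{k=1}^n\frac{1}{(N-k)^2}\Big).
\]

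\textbf{Step 3: Chernoff bound.} Since $X-\mu=S_n=(N-n)M_n$, Markov's inequality and optimization over $\lambda$ give
\[
\Pr[X-\mu\ge t]\le \exp\Big(-\frac{2t^2}{(b-a)^2 (N-n)^2\sum_{k=1}^n (N-k)^{-2}}\Big).
\]
The same bound holds for $\Pr[X-\mu\le -t]$, by symmetry or by applying the argument to $-Y_j$. A union bound over the two tails supplies the factor $2$.

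\textbf{Step 4: the variance proxy.} It remains to show
\[
(N-n)^2\sum_{k=1}^n (N-k)^{-2}\le n\Big(1-\frac{n-1}{N}\Big).
\]
I expect this to be the main obstacle, since it is exactly the finite-population correction.

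My first attempt is a telescoping comparison. The crude bound $1/(N-k)^2\le 1/((N-k-1)(N-k))$ telescopes to $n/((N-n-1)(N-1))$, but it breaks at $N-k-1=0$. It is also slightly too weak: multiplied by $(N-n)^2$ it gives $n(N-n)^2/((N-n-1)(N-1))$, which exceeds $n(N-n+1)/N$ when $N-n$ is small.

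The better comparison is
\[
\frac{1}{(N-k)^2}\le \frac{1}{(N-k-\tfrac12)(N-k+\tfrac12)}=\frac{1}{N-k-\tfrac12}-\frac{1}{N-k+\tfrac12}.
\]
The sum then telescopes to $n/\big((N-n-\tfrac12)(N-\tfrac12)\big)$. What remains is to verify the elementary polynomial inequality
\[
(N-n)^2 N\le (N-n+1)(N-n-\tfrac12)(N-\tfrac12)
\]
for $1\le n\le N-1$; I would clear denominators and expand. If this fails at some boundary case (for example $n=N-1$, which I would check directly), I would fall back to Serfling's sharper route. That route replaces the factor $1/(N-k)$ by a refined martingale normalization with weights chosen so that the sum of squared ranges equals exactly $n(N-n+1)/N^2$ up to rescaling, and then Steps 2–3 go through unchanged.
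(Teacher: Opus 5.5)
\textbf{Gap in Step 4.} Your Steps 1--3 are correct. They are the martingale argument of Serfling and Bardenet--Maillard that underlies the result the paper cites. The paper does not reprove the inequality. It specializes Bardenet--Maillard's Theorem 2.4 to $k=n$, substitutes $t=n\epsilon$, applies the same bound to $-Y_j$, and takes a union bound, so the finite-population correction comes with the citation.

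In your self-contained route everything therefore rests on Step 4, and the telescoping argument you give for it fails in infinitely many cases, not just one boundary case. With $m=N-n\ge 1$, expanding gives
\[
(N-n+1)\bigl(N-n-\tfrac12\bigr)\bigl(N-\tfrac12\bigr)-(N-n)^2N=\tfrac12(m-1)(n-1)-\tfrac14(m+1).
\]
This is negative for $n=1$ (every $N$), for $n=N-1$ (every $N$), and for $(N,n)=(4,2)$.

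The failure at $n=1$ cannot be avoided. There Step 4 holds with equality, $(N-1)^2(N-1)^{-2}=1=1\cdot(1-0)$. So replacing the single term $(N-1)^{-2}$ by anything strictly larger, such as $1/\bigl((N-\tfrac32)(N-\tfrac12)\bigr)$, must overshoot. Your fallback, a ``refined normalization'' with unspecified weights, is too vague to count as an argument, and it is not needed.

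\textbf{The fix: induction instead of telescoping.} Set $g(n)=\frac{n(N-n+1)}{N(N-n)^2}$, so that $(N-n)^2g(n)=n\bigl(1-\frac{n-1}{N}\bigr)$. Prove $\sum_{k=1}^n(N-k)^{-2}\le g(n)$ for $1\le n\le N-1$. The base case $n=1$ is the equality above. For $2\le n\le N-1$, with $m=N-n$,
\[
g(n)-g(n-1)-\frac{1}{m^2}=\frac{n(m+1)^3-(n-1)(m+2)m^2-N(m+1)^2}{Nm^2(m+1)^2}=\frac{m(n-1)}{Nm^2(m+1)^2}\ge 0.
\]
Hence $\sum_{k=1}^{n}(N-k)^{-2}\le g(n-1)+m^{-2}\le g(n)$. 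With this in place of your Step 4, Steps 1--3 and your trivial treatment of $n=N$ give the stated two-sided bound. That is a complete, self-contained proof, which the paper itself does not provide.
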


\begin{remark}[Derivation from Bardenet \& Maillard]
    Theorem 2.4 in \cite{bardenet2015concentration} states that
    \[
        P\left(\max_{1 \leq k \leq n} \frac{\sum_{t=1}^k (X_t - \mu)}{N-k} \geq \frac{n\epsilon}{N-n}\right) \leq \exp\left(-\frac{2n\epsilon^2}{(1 - (n-1)/N)(b-a)^2}\right).
    \]
    Specializing to $k = n$, the left side simplifies to $P\left(\sum_{t=1}^n (X_t - \mu) \geq n\epsilon\right)$. Substituting $t = n\epsilon$ yields the one-sided bound $P(X - E[X] \geq t) \leq \exp\left(-\frac{2t^2}{n(1 - (n-1)/N)(b-a)^2}\right)$. Applying the same argument to $-X_t$ for the lower tail and taking a union bound gives the two-sided version stated in Theorem~\ref{thm:serfling}.
\end{remark}

\subsection{Asymptotic Analysis of the Sample Complexity}\label{app:sample-complexity}

Let $L = \ln(2c/\delta)$ for notational convenience. The required number of samples for source distribution estimation (Section~\ref{sec:coverage_source_estimation}) is given by:
\begin{equation}
    n = \frac{(N + 1)L}{L + 2\epsilon^2 N}.
\end{equation}
To analyze the asymptotic behavior as $N \to \infty$, we divide both numerator and denominator by $N$:
\begin{equation}
    n = \frac{(1 + 1/N)L}{L/N + 2\epsilon^2}.
\end{equation}
Taking the limit as $N \to \infty$:
\begin{equation}
    \lim_{N \to \infty} n = \frac{L}{2\epsilon^2} = \frac{\ln(2c/\delta)}{2\epsilon^2}.
\end{equation}
Therefore, the sample complexity is:
\begin{equation}
    n = O\left(\frac{\log(c/\delta)}{\epsilon^2}\right),
\end{equation}
which is independent of the dataset size $N$ for sufficiently large datasets.
\section{Linearization of the Price of Fairness ILP}

This appendix provides supplementary material for Section~\ref{sec:price_of_fairness}, presenting the linearization of the two non-linear components of the ILP and the complete integer linear program.

\subsection{Linearization of Constraints}\label{app:ILP_linearization}

The formulation presented in Section~\ref{sec:price_of_fairness} contains two sources of non-linearity: the approximate fairness constraint, which involves a ratio of decision variables, and the cost function, which is piecewise linear. We now show how both can be reformulated to obtain a pure integer linear program.

\subsubsection{Fairness Constraint}

The approximate fairness constraint bounds the deviation between each group's outcome distribution and the global target:
\begin{equation}
    \left| f^{\text{new}}_{\textbf{s},y} - f_y \right| \leq \epsilon
\end{equation}
where $f^{\text{new}}_{\textbf{s},y} = \frac{[\textbf{s}y] + \Delta[\textbf{s}y]}{\textbf{s} + \Delta\textbf{s}}$ is the proportion of label $y$ within group $\textbf{s}$ after mitigation, and $f_y$ is the global proportion of label $y$. Expanding the absolute value yields:
\begin{equation}
    -\epsilon \leq \frac{[\textbf{s}y] + \Delta[\textbf{s}y]}{\textbf{s} + \Delta\textbf{s}} - f_y \leq \epsilon
\end{equation}

Since $\textbf{s} + \Delta\textbf{s} > 0$ is guaranteed by the coverage constraints, we can cross-multiply without changing the inequality direction:
\begin{equation}
    -\epsilon(\textbf{s} + \Delta\textbf{s}) \leq [\textbf{s}y] + \Delta[\textbf{s}y] - f_y(\textbf{s} + \Delta\textbf{s}) \leq \epsilon(\textbf{s} + \Delta\textbf{s})
\end{equation}

Rearranging each inequality and substituting $\Delta\textbf{s} = \sum_{y'} \Delta[\textbf{s}y']$, we obtain two linear constraints:
\begin{align}
    \Delta[\textbf{s}y] - (f_y + \epsilon)\sum_{y'}\Delta[\textbf{s}y'] &\leq (f_y + \epsilon)\textbf{s} - [\textbf{s}y] \label{eq:fair-upper}\\
    \Delta[\textbf{s}y] - (f_y - \epsilon)\sum_{y'}\Delta[\textbf{s}y'] &\geq (f_y - \epsilon)\textbf{s} - [\textbf{s}y] \label{eq:fair-lower}
\end{align}

The right-hand sides are constants (computed from the original data), and the left-hand sides are linear in the decision variables $\Delta[\textbf{s}y]$. This linearization is exact, \ie no approximation is introduced.

\subsubsection{Piecewise Cost Function}

The cost function of modifying the dataset distinguishes between additions and deletions:
\begin{equation}
    \text{cost}(c_a, c_d, \Delta[\textbf{s}y]) = 
    \begin{cases} 
        c_a \cdot \Delta[\textbf{s}y] & \text{if } \Delta[\textbf{s}y] > 0 \\
        -c_d \cdot \Delta[\textbf{s}y] & \text{if } \Delta[\textbf{s}y] < 0 \\
        0 & \text{if } \Delta[\textbf{s}y] = 0
    \end{cases}
\end{equation}

To linearize this piecewise function, we decompose each decision variable into its positive and negative parts. For each group-label $(\textbf{s}, y)$, we introduce auxiliary variables $\Delta^+_{\textbf{s}y}, \Delta^-_{\textbf{s}y} \geq 0$ representing the number of additions and deletions respectively, with the constraint that the net change equals the difference between additions and deletions:
\begin{equation}
    \Delta[\textbf{s}y] = \Delta^+_{\textbf{s},y} - \Delta^-_{\textbf{s},y}
\end{equation}

The total cost then becomes a linear function of these auxiliary variables:
\begin{equation}
    \sum_{\textbf{s},y} \left( c_a \cdot \Delta^+_{\textbf{s},y} + c_d \cdot \Delta^-_{\textbf{s},y} \right)
\end{equation}

Note that we do not require an explicit constraint preventing both $\Delta^+_{\textbf{s},y}$ and $\Delta^-_{\textbf{s},y}$ from being positive simultaneously. Assuming $c_a, c_d > 0$ (which holds in all practical settings where both additions and deletions have a non-zero cost), any feasible solution with both variables positive for the same group-label can be improved by reducing both by $\min(\Delta^+_{\textbf{s},y}, \Delta^-_{\textbf{s},y})$, so the optimizer will naturally avoid such solutions (simultaneously adding and deleting from the same group-label is always suboptimal).

\subsection{Complete ILP Formulation}

Combining these linearizations, the complete integer linear program is as follows:
\begin{align}
    \min \quad & f(\{[\textbf{s}y]\},\{\Delta[\textbf{s}y]\})\\
    \text{s.t.} \quad 
    & \Delta[\textbf{s}y] - (f_y + \epsilon)\sum_{y'}\Delta[\textbf{s}y'] \leq (f_y + \epsilon)\textbf{s} - [\textbf{s}y] & \forall \textbf{s}, y \quad & \text{(fairness upper bound)}\\
    & \Delta[\textbf{s}y] - (f_y - \epsilon)\sum_{y'}\Delta[\textbf{s}y'] \geq (f_y - \epsilon)\textbf{s} - [\textbf{s}y] & \forall \textbf{s}, y \quad & \text{(fairness lower bound)}\\
    & [\textbf{s}y] + \Delta[\textbf{s}y] \geq m_{\textbf{s},y} & \forall \textbf{s}, y \quad & \text{(coverage)}\\
    & \Delta[\textbf{s}y] = \Delta^+_{\textbf{s},y} - \Delta^-_{\textbf{s},y} & \forall \textbf{s}, y \quad & \text{(decomposition)}\label{eq:decomposition}\\
    & \Delta^+_{\textbf{s},y}, \Delta^-_{\textbf{s},y} \geq 0 & \forall \textbf{s}, y \quad & \text{(non-negativity)}\\
    & \Delta^+_{\textbf{s},y}, \Delta^-_{\textbf{s},y} \in \mathbb{Z} & \forall \textbf{s}, y \quad & \text{(integrality)}\\
    &  \Delta\textbf{s} = \Delta[\textbf{s}y_1]+\dots+\Delta[\textbf{s}y_k] & \forall \textbf{s} \quad & \text{(definition of $\Delta\textbf{s}$)} \\
    & \textbf{s} = [\textbf{s}y_1]+\dots+[\textbf{s}y_k] & \forall \textbf{s} \quad & \text{(definition of $\textbf{s}$)} \\
    & \sum_{\textbf{s},y} \left( c_a \cdot \Delta^+_{\textbf{s},y} + c_d \cdot \Delta^-_{\textbf{s},y} \right)\leq B && \text{(cost and budget)}
\end{align}

The last two constraints make explicit the definitions of $\Delta\textbf{s}$ and $\textbf{s}$ used in the fairness constraints above, where $y_1, \dots, y_k$ denote all label values.

\begin{remark}
    The objective $f$ is a placeholder for any of the three objectives defined in Section~\ref{sec:price_of_fairness}. For \texttt{min\_size}, $\Delta[\textbf{s}y]$ can be substituted directly. For \texttt{min\_changes} and \texttt{min\_cost}, the objectives are already expressed in terms of the auxiliary variables $\Delta^+_{\textbf{s},y}$ and $\Delta^-_{\textbf{s},y}$ introduced by the decomposition constraint~\eqref{eq:decomposition}.
\end{remark}

\begin{remark}
    The ILP optimizes over all $\Delta[\textbf{s}y]$ simultaneously without fixing a reference label $y_i$ per group $\textbf{s}$ (as was done by Scarone et al.~\cite{scarone2025principled} and in Section~\ref{sec:approx}). Since any fixed-strategy solution is a feasible point in the ILP's solution space, the ILP's optimal cost is a lower bound on the cost of any fixed-strategy solution, and thus finds the globally optimal mitigation strategy.
\end{remark}
\section{The price of fairness: Additional ILP Experimental Results}\label{app:ILP_experiments}

This appendix contains the complete experimental results for Section~\ref{sec:price_of_fairness}, complementing the representative results shown in the main text with the remaining datasets for each experiment.

\paragraph{Fairness tolerance} 
The results are shown in Figures~\ref{fig:vary_eps_adult} (Adult) and~\ref{fig:vary_eps_default} (Default). 

\begin{figure}
    \centering
    \includegraphics[width=0.9\linewidth]
    {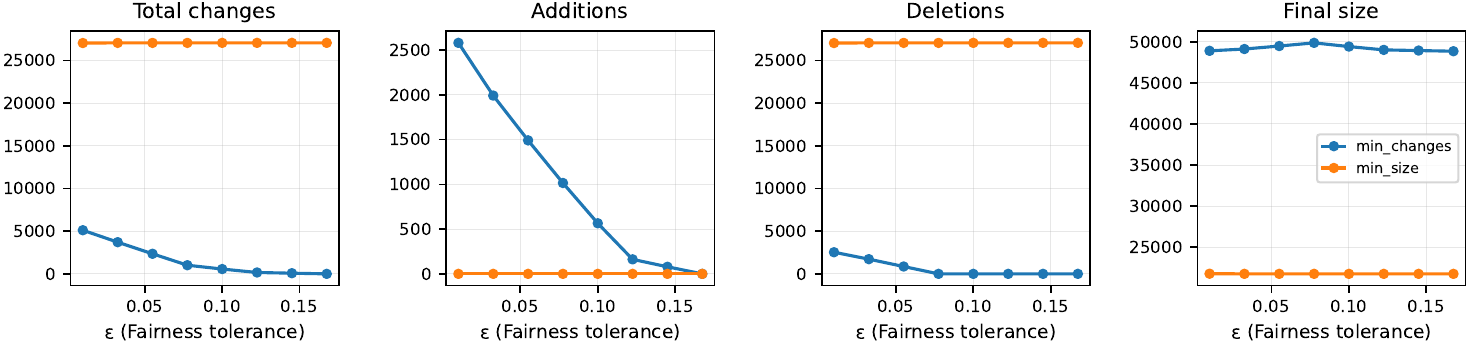}
    \caption{Effect of fairness tolerance ($\epsilon$) on mitigation solutions for the Adult dataset.}
    \Description{}
    \label{fig:vary_eps_adult}
\end{figure}

\begin{figure}
    \centering
    \includegraphics[width=0.9\linewidth]
    {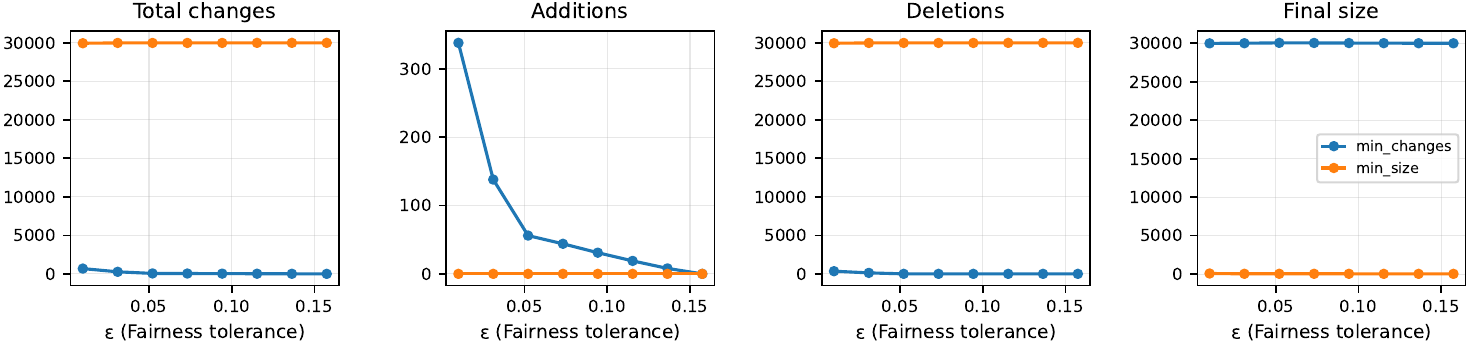}
    \caption{Effect of fairness tolerance ($\epsilon$) on mitigation solutions for the Default dataset.}
    \Description{}
    \label{fig:vary_eps_default}
\end{figure}

\paragraph{Coverage requirements}
The results are shown in Figures~\ref{fig:vary_coverage_compas} (COMPAS) and~\ref{fig:vary_coverage_default} (Default). 

\begin{figure}
    \centering
    \includegraphics[width=0.9\linewidth]{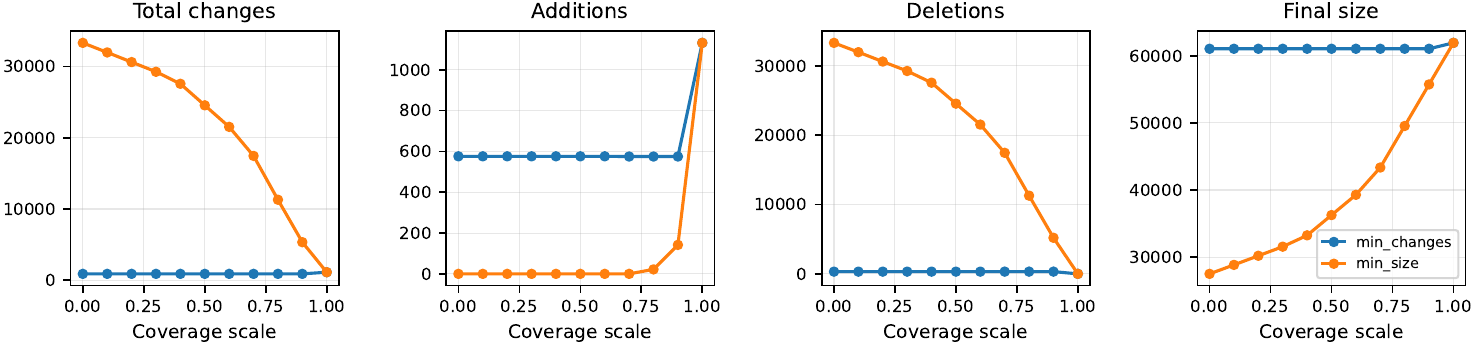}
    \caption{Effect of coverage scale on mitigation solutions for the COMPAS dataset.}
    \Description{}
    \label{fig:vary_coverage_compas}
\end{figure}

\begin{figure}
    \centering
    \includegraphics[width=0.9\linewidth]{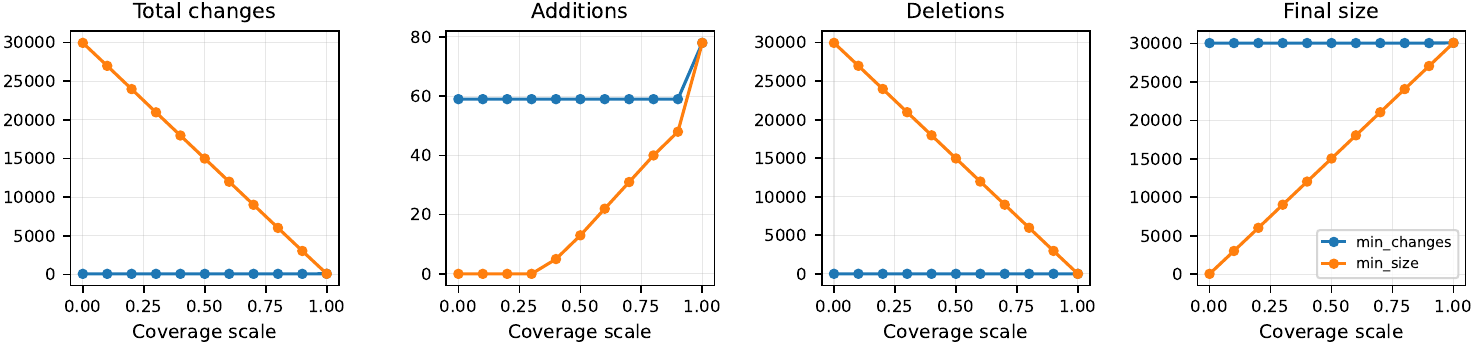}
    \caption{Effect of coverage scale on mitigation solutions for the Default dataset.}
    \Description{}
    \label{fig:vary_coverage_default}
\end{figure}

\paragraph{Cost asymmetry}
The results are shown in Figures~\ref{fig:vary_cost_adult} (Adult) and~\ref{fig:vary_cost_compas} (COMPAS).

\begin{figure}
    \centering
    \includegraphics[width=0.9\linewidth]{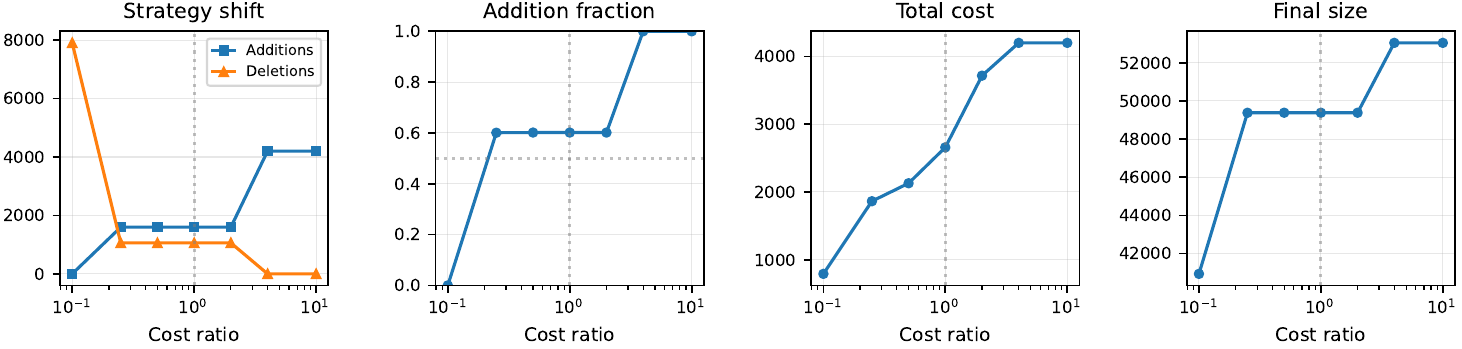}
    \caption{Effect of cost ratio on mitigation solutions for the Adult dataset.}
    \Description{}
    \label{fig:vary_cost_adult}
\end{figure}

\begin{figure}
    \centering
    \includegraphics[width=0.9\linewidth]{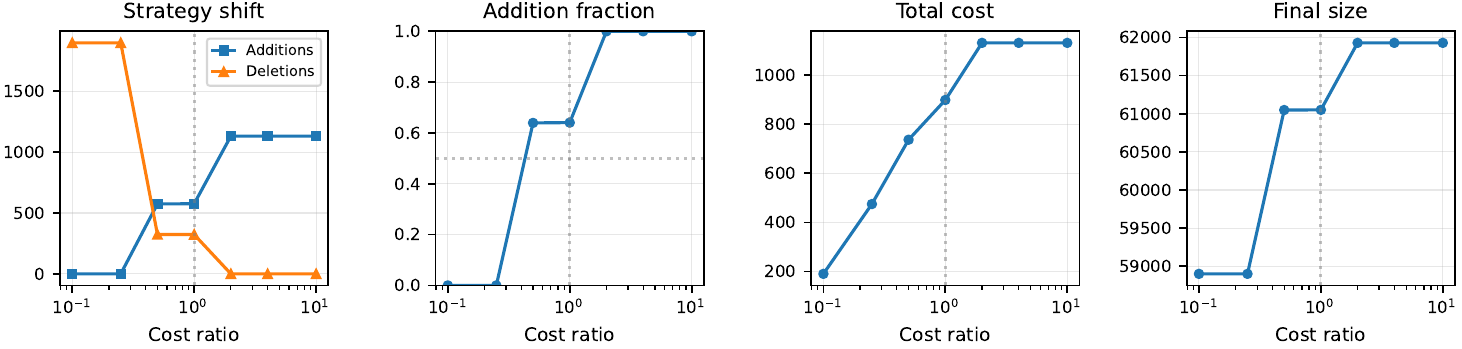}
    \caption{Effect of cost ratio on mitigation solutions for the COMPAS dataset.}
    \Description{}
    \label{fig:vary_cost_compas}
\end{figure}

\section{ML Evaluation}

This appendix provides supplementary material for Section~\ref{sec:ml_evaluation}, detailing the selection of the initial sample size $x_0$ and presenting the remaining experimental results.

\subsection{Selection of initial sample size}\label{app:x0_selection}
We now derive the maximum feasible initial sample fraction $x_0$ used in Section~\ref{sec:ml_evaluation}. Since the sampling is uniform with fraction $x$, each group-label pair $(\textbf{s}, y)$ contributes approximately $x \cdot [\textbf{s}y]$ instances to the initial sample and $(1-x) \cdot [\textbf{s}y]$ to the external pool. The mitigation computed on the initial sample requires additions that also scale with $x$: if the full dataset requires $\Delta^+_{\textbf{s},y}$ additions, the initial sample requires $x \cdot \Delta^+_{\textbf{s},y}$. For the pool to satisfy these additions, we need $(1-x) \cdot [\textbf{s}y] \geq x \cdot \Delta^+_{\textbf{s},y}$, which gives $x \leq \frac{[\textbf{s}y]}{[\textbf{s}y] + \Delta^+_{\textbf{s},y}}$. Taking the minimum over all group-label pairs with positive additions across all parameter configurations yields the maximum feasible fraction:
\begin{equation}
    x_{\max} = \min_{\substack{(\textbf{s}, y, p): \\ \Delta^+_{\textbf{s},y}(p) > 0}} \frac{[\textbf{s}y]}{[\textbf{s}y] + \Delta^+_{\textbf{s},y}(p)}
\end{equation}
where $\Delta^+_{\textbf{s},y}(p)$ denotes the additions for group-label $(\textbf{s}, y)$ under parameter configuration $p$. We set the initial sample size to $x_0 = \lfloor x_{\max} \cdot |T| \rfloor$.

\subsection{Additional experimental results}\label{app:ml_eval_experiments}

\paragraph{Fairness tolerance} 
The results are shown in Figures~\ref{fig:ml_eval_vary_epsilon_adult} (Adult) and ~\ref{fig:ml_eval_vary_epsilon_default} (Default).

\begin{figure}[htbp]
    \centering
    \begin{subfigure}[t]{0.35\textwidth}
        \centering
        \includegraphics[width=\textwidth]{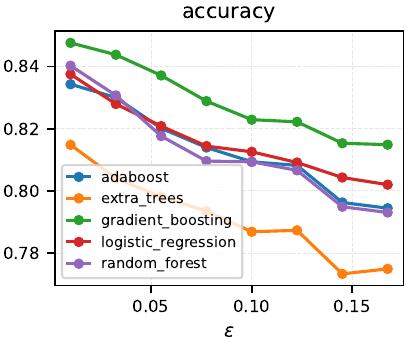}
        \caption{\texttt{min\_size} objective}
    \end{subfigure}
    \hfill
    \begin{subfigure}[t]{0.35\textwidth}
        \centering
        \includegraphics[width=\textwidth]{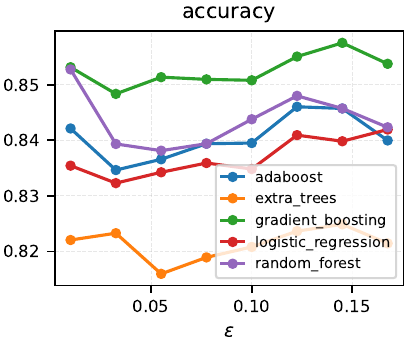}
        \caption{\texttt{min\_changes} objective}
    \end{subfigure}
    \caption{Fairness tolerance vs. Accuracy: Adult dataset.}
    \Description{}
    \label{fig:ml_eval_vary_epsilon_adult}
\end{figure}

\begin{figure}[htbp]
    \centering
    \begin{subfigure}[t]{0.35\textwidth}
        \centering
        \includegraphics[width=\textwidth]{figures/ml_eval/vary_epsilon/default_credit_sex_label_EDUCATION_default_payment_ml_eval_vary_epsilon_min_size_accuracy_compact.pdf}
        \caption{\texttt{min\_size} objective}
    \end{subfigure}
    \hfill
    \begin{subfigure}[t]{0.35\textwidth}
        \centering
        \includegraphics[width=\textwidth]{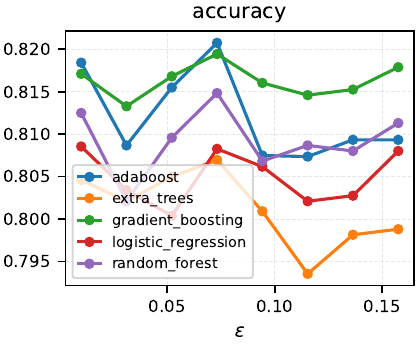}
        \caption{\texttt{min\_changes} objective}
    \end{subfigure}
    \caption{Fairness tolerance vs. Accuracy: Default dataset.}
    \Description{}
    \label{fig:ml_eval_vary_epsilon_default}
\end{figure}

\paragraph{Coverage requirements}
The results are shown in Figure~\ref{fig:ml_eval_vary_coverage_adult} (Adult), Figure~\ref{fig:ml_eval_vary_coverage_compas_min_size} (COMPAS \texttt{min\_size}) and Figure~\ref{fig:ml_eval_vary_coverage_default_min_changes} (Default \texttt{min\_changes}).

\begin{figure}[htbp]
    \centering
    \begin{subfigure}[t]{0.35\textwidth}
        \centering
        \includegraphics[width=\textwidth]{figures/ml_eval/vary_coverage/adult_sex_race_binary_income_ml_eval_vary_coverage_min_size_accuracy_compact.pdf}
        \caption{\texttt{min\_size} objective}
    \end{subfigure}
    \hfill
    \begin{subfigure}[t]{0.35\textwidth}
        \centering
        \includegraphics[width=\textwidth]{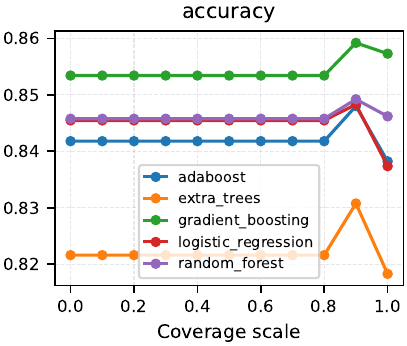}
        \caption{\texttt{min\_changes} objective}
    \end{subfigure}
    \caption{Coverage vs. Accuracy: Adult dataset.}
    \Description{}
    \label{fig:ml_eval_vary_coverage_adult}
\end{figure}

\begin{figure}
    \centering
    \includegraphics[scale=.8]%
    {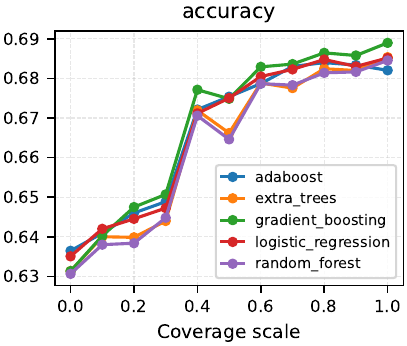}
    \caption{Coverage vs. Accuracy: COMPAS \texttt{min\_size} objective.}
    \Description{}
    \label{fig:ml_eval_vary_coverage_compas_min_size}
\end{figure}

\begin{figure}
    \centering
    \includegraphics[scale=.8]%
    {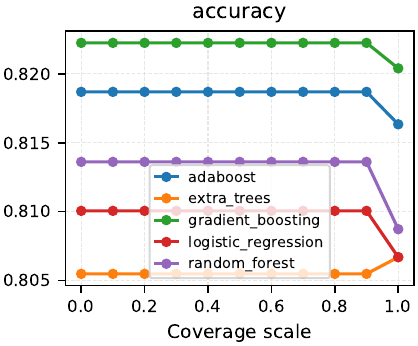}
    \caption{Coverage vs. Accuracy: Default \texttt{min\_changes} objective.}
    \Description{}
    \label{fig:ml_eval_vary_coverage_default_min_changes}
\end{figure}

\end{document}